\documentclass{article}

\usepackage{microtype}
\usepackage{graphicx}
\usepackage{subcaption}
\usepackage{booktabs} 
\usepackage{bm}
\usepackage{hyperref}

\usepackage[preprint]{icml2026_weightsymmetry}

\usepackage{amsmath}
\usepackage{amssymb}
\usepackage{mathtools}
\usepackage{amsthm}

\usepackage[capitalize,noabbrev]{cleveref}

\theoremstyle{plain}
\newtheorem{theorem}{Theorem}[section]
\newtheorem{proposition}[theorem]{Proposition}

\newtheorem{corollary}[theorem]{Corollary}
\theoremstyle{definition}

\newtheorem{assumption}[theorem]{Assumption}
\theoremstyle{remark}

\newcommand{\R}{\mathbb{R}}

\newcommand{\norm}[1]{\left\|#1\right\|}

\newcommand{\bfd}{\mathbf{d}}

\newcommand{\cJ}{\mathcal{J}}
\newcommand{\cC}{\mathcal{C}}
\newcommand{\cK}{\mathcal{K}}
\newcommand{\diag}{\operatorname{diag}}

\newcommand{\rank}{\operatorname{rank}}

\usepackage[textsize=tiny]{todonotes}

\title{Model Merging by Output-Space Projection}

\begin{document}

\twocolumn[
  \icmltitle{Model Merging by Output-Space Projection}



  \icmlsetsymbol{equal}{*}

  \begin{icmlauthorlist}
    \icmlauthor{Bethan Evans}{yyy}
    \icmlauthor{Benjamin Etheridge}{xxx}
    \icmlauthor{Stephen Roberts}{xxx}
    \icmlauthor{Jared Tanner}{yyy}
  \end{icmlauthorlist}
  \icmlaffiliation{yyy}{Department of Mathematics, University of Oxford, Oxford, UK}

  \icmlcorrespondingauthor{Bethan Evans}{bethan.evans@maths.ox.ac.uk}

  \icmlaffiliation{xxx}{Department of Engineering Science, University of Oxford, Oxford, UK}


  \icmlkeywords{Machine Learning, ICML}

  \vskip 0.3in
]



\printAffiliationsAndNotice{}  

\begin{abstract}
Model merging combines fine-tuned checkpoints into a single multi-task model without retraining. Existing methods—such as task arithmetic, model soups, TIES, and DARE—are computationally efficient and empirically successful, but rely on heuristic design choices and lack formal optimality guarantees. We show that merging can be formulated as a convex quadratic programme over residual updates, yielding weights that minimise a squared-output calibration objective using calibration inputs and fine-tuned model outputs, and subsuming existing methods as special cases. Our framework yields a closed-form diagnostic—the fraction of residual energy captured by a chosen basis—that predicts downstream merge quality using only the calibration set. Empirically, the QP matches or outperforms existing methods in the single-layer setting, and we characterise when the optimal basis provides significant gains over the cheaper diagonal QP. We extend to multi-layer merging via a sequential layer-wise algorithm and demonstrate consistent gains across language and vision benchmarks.
\end{abstract}

\section{Introduction}
\label{sec:intro}
The proliferation of task-specific fine-tuned language and vision models has motivated growing interest in model merging to improve performance across broader sets of tasks; see \citep{yang_model_2025} and references therein.
Existing practical approaches—task arithmetic \citep{ilharco2023editing}, model soups \citep{wortsman2022model}, TIES \citep{yadav2023ties}, DARE \citep{yu2023dare}, Fisher \citep{matena_merging_2022} and Adamerge \citep{yang_adamerging_2024}—combine residual updates using fixed or stochastic rules and have demonstrated strong empirical performance.  
However, these methods do not characterise the optimal combination weights, nor have a precise mathematical notion or loss function for which their method is optimal.

We begin with a simple observation: model merging via diagonal masks on fine-tuned residuals can be cast as a convex quadratic programme (QP).  
Solving this QP yields weights that minimise a squared-output calibration objective using calibration inputs and fine-tuned model outputs, and subsumes existing diagonal-mask methods—including task arithmetic and TIES—as special cases.
This observation can be extended to a more general principle. Rather than combining parameters directly, merging can be viewed as approximating the residual errors of a base model using a subspace of candidate corrections.  
This leads to a formulation in which merging corresponds to projecting residuals onto an admissible subspace in output space. The corresponding minimum-loss solution is obtained by projecting base-model residuals onto the subspace spanned by candidate model updates, where the optimal basis for the relaxed projection problem is given by the leading eigenvectors of the residual energy matrix.

This projection perspective provides a unifying framework for a broad class of quadratic and basis-dependent model merging methods.
By generalising the diagonal QP to arbitrary orthonormal bases, we obtain a general basis QP and derive a closed-form expression for the irreducible error of any basis choice.  
Minimising this error yields the output-weighted principal components of residual activations on the calibration set as the optimal basis, with the diagonal basis arising as a special case.

While the diagonal QP is computationally efficient, the optimal basis requires a data-dependent eigendecomposition of the residual matrix, yielding lower squared-output loss at higher cost. Our framework characterises this trade-off through a computable suboptimality gap. In particular, diagonal merging is optimal when model updates decompose into independent modes in a shared basis.
Our framework also clarifies the relationship to parameter-space methods.  
We show that existing approaches such as model soups, TIES, and task arithmetic correspond to special cases of a diagonal-mask QP in a chosen basis, and we demonstrate that our diagonal QP is optimal on some popular language and image models.  


\section{Background}
\label{sec:framework}
We consider an $M$-layer linear network with weight matrices
$\{W_i\}_{i=1}^M$. For input $x \in \mathbb{R}^{d}$ the base-model output is
\begin{equation}
  h(x;\theta) \;=\; W_M W_{M-1}\cdots W_1\, x \in \mathbb{R}^c.
\end{equation}
Suppose we are given $K$ fine-tuned models obtained by adding a weight residual
update to a single layer $N$.  For model $k \in \{1,\dots,K\}$ the update is
$\delta_N^{(k)}$, yielding
\begin{equation}
  h(x; \theta + \delta^{(k)}_N)
  \;=\;
  W_M \cdots (W_N + \delta_N^{(k)}) \cdots W_1 \, x .
\end{equation}
Initially, we focus on single-layer merging at layer $N$;
multi-layer extensions are discussed later.
For notational convenience, define the lower and upper compositions
\begin{equation}
  Z \;:=\; \prod_{\ell=1}^{N-1} W_\ell,
  \qquad
  L \;:=\; \prod_{\ell=N+1}^{M} W_\ell ,
\end{equation}
so that the model factors as
\begin{equation}\nonumber
  h(x;\theta) = L W_N Z x,
  \qquad
  h^{(k)}(x) = L (W_N + \delta_N^{(k)}) Z x .
\end{equation}

\noindent
Given a calibration set $\{(x_j, y_j)\}_{j \in \mathcal K}$, where $y_j$ denotes the desired output for input $x_j$, our goal is to combine the residual updates $\{\delta_N^{(k)}\}_{k=1}^K$ into a single merged model that performs well across tasks. In practice, the method requires only calibration inputs together with outputs from the fine-tuned models, so $y_j$ may instead be taken from the corresponding task-specific model when labelled calibration data is unavailable.
Throughout, the squared output loss is chosen because it forms the loss as a convex quadratic. Under the same linear-in-weight-update assumption, cross-entropy yields a convex but non-quadratic objective in the merge coefficients. It therefore does not admit the same closed-form projection and trace-based characterisation; we leave a systematic treatment of non-quadratic losses to future work.

\paragraph{Extension to Non-Linear Networks}
The linear assumption may be relaxed by linearising the merged layer of the network around the base model, so that the output is approximately linear in the residual update. This approximation is exact for linear models and locally accurate for sufficiently small updates in some piecewise-linear networks.

\section{Diagonal Mask Merging as a QP}\label{qp}


We propose to merge models by applying a \emph{diagonal mask}
$D_k = \diag(d^{(k)}) \in \R^{r\times r}$, $d^{(k)}\in\R^r$, to each residual
before summing. Writing the stacked decision vector $\bfd=[d^{(1)};\dots;d^{(K)}]\in\R^{Kr}$, the merged residual at layer $N$ is
\begin{equation}
  \label{eq:merged-residual}
  \delta_{\mathrm{merge}}(\bfd) \;:=\; \sum_{k=1}^K D_k\, \delta_N^{(k)},
\end{equation}
and the corresponding merged-model output on input $x$ is
\begin{equation}
  \label{eq:merge}
  h_{\mathrm{merge}}(x;\bfd)
  \;:=\;
  L\!\left(W_N + \delta_{\mathrm{merge}}(\bfd)\right)\!Z x.
\end{equation}
Existing merging methods fall under this framework as special cases:
setting $D_k = \lambda_k I$ for fixed scalars $\lambda_k$ recovers \emph{task arithmetic} \citep{ilharco_editing_2023};
$D_k = (1/K)\,I$ recovers model soups \citep{wortsman_model_2022};
$D_k = \diag(m^{(k)}/p)$ with $m^{(k)}_i \overset{\text{iid}}{\sim} \mathrm{Bernoulli}(p)$ recovers a \emph{row-uniform} restriction of DARE \citep{yu2023dare}, in which all entries within a row of the residual are scaled by the same factor (the full element-wise DARE explores a strictly larger mask family);
a diagonal mask with sign-based pruning can be interpreted as TIES \citep{yadav2023ties}.

A detailed description of how common existing methods fit into this framework is given in Appendix \ref{sec:related}.



For a calibration set $\cK$ define the squared-output loss by:
\begin{equation}
  \cJ(\bfd) \;: =\; \sum_{j\in\cK} \norm{A_j\bfd + b_j}^2,
\end{equation}
where $b_j := h(x_j;\theta)-y_j$ and $A_j$ is formed as follows: the hidden-layer residual $r_{i,j} := \delta^{(i)}_N Zx_j \in \mathbb{R}^r$ is diagonalised to form $A_{i,j} := L\diag(r_{i,j})$, and these are concatenated as $A_j := [A_{1,j}\;\cdots\;A_{K,j}] \in \mathbb{R}^{c \times Kr}$. Here $y_j$ denotes either labelled calibration targets, when available, or outputs from the corresponding fine-tuned model.
Expanding gives
the convex quadratic
\begin{align}
  \label{eq:QP}
  \cJ(\bfd) \;=\; \tfrac{1}{2}\bfd^\top H\bfd + g^\top\bfd + \mathrm{const},\\
  H \;=\; 2\sum_{j\in\cK} A_j^\top A_j \;\succeq\; 0, \quad
  g \;=\; 2\sum_{j\in\cK} A_j^\top b_j.
\end{align}
Subject to any convex constraint set $\cC$ (e.g.\ box constraints
$0\le\bfd\le\mathbf{1}$), the resulting programme
\begin{equation}
  \tag{QP}
  \min_{\bfd\in\cC}\;\tfrac{1}{2}\bfd^\top H\bfd + g^\top\bfd
\end{equation}
is convex and can be solved globally \citep{boyd_convex_2004} to obtain the optimal diagonal mask vector $\mathbf d^\ast$. Since methods such as task arithmetic and model soups correspond to feasible points of (QP), the optimal solution achieves validation loss no worse than these methods under the same objective.

\section{General Basis Irreducible Error}\label{general}
The QP formed by this merging strategy is convex and thus attains a global minimum; however, the achievable loss depends on the chosen subspace. We extend this to a general basis and show that there exists a basis minimising the relaxed irreducible error in the diagonal-mask framework.
The diagonal merging introduced in Section~3 corresponds to restricting updates to the standard coordinate basis $\{e_i\}_{i=1}^r$ in the residual space. Left-multiplication by a diagonal mask scales the rows of the residual update independently, corresponding to weighting hidden features at the merged layer without mixing coordinates. The resulting output corrections therefore lie in the subspace $\mathrm{span}\{Le_i\}$.

We now consider a relaxed projection problem in which each residual may be corrected by an arbitrary vector in a common output subspace $\mathcal S \subseteq \mathbb{R}^c$.
This yields a closed-form characterisation of how much residual energy a chosen output subspace can capture. In the relaxed formulation, each sample residual may be corrected independently within the subspace. The practical merging problem instead requires a shared set of merge coefficients across all samples, leading to the general-basis QP in Proposition~\ref{thm:general-qp}. 
We consider any orthonormal basis $P=\lbrace q_1 , \ldots q_p \rbrace$ of the subspace.



\begin{theorem}[Merging as a Projection and Basis-Dependent Error]\label{projection}
Let $\mathcal K$ be the calibration set with $n := |\mathcal K|$, and let
$h(x_j;\theta)$ denote the base model with residuals
\(
b_j = h(x_j;\theta) - y_j \in \mathbb{R}^c.
\)
Define the matrix
\[
S := \sum_{j\in\mathcal K} b_j b_j^\top \in \mathbb{R}^{c\times c}.
\]
Let the set of all output corrections the parametrisation can produce be denoted by
\(\mathcal S\subseteq\mathbb{R}^c\) for the whole calibration set (the range of the linear map from coefficients to stacked outputs). Let \(P_{\mathcal S}\) be the orthogonal projector onto \(\mathcal S\). The optimal correction inside \(\mathcal S\) is the orthogonal projection of \(-b_j\) onto \(\mathcal S\), so the residual after merging for sample \(j\) is
\[
(I-P_{\mathcal S})b_j.
\]

Recall that \(\mathcal J(\mathbf d)\) denotes the squared-output loss as a function of the merge coefficients \(\mathbf d\), and let \(\mathcal J_{\mathcal S}\) correspond to the minimum of \(\mathcal J(\mathbf d)\) over all \(\mathbf d\) whose induced output lies in \(\mathcal S\).
Using the identity \(\sum_{j \in \mathcal K} \|P_{\mathcal S} b_j\|^2 = \operatorname{tr}(S P_{\mathcal S})\), we obtain that the minimal loss on the calibration set \(\mathcal K\) attainable within \(\mathcal S\) is
\begin{equation}
  \label{eq:error-trace}
  \mathcal J_{\mathcal S}
  \;=\;
  \sum_{j\in\mathcal K}\|b_j\|^2
  \;-\;
  \operatorname{tr}(S\,P_{\mathcal S}).
\end{equation}
\end{theorem}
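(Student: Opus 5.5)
The argument works entirely in the relaxed setting fixed before the statement: for each sample $j$ we may add an arbitrary correction $v_j\in\mathcal S$ to the base output, so the post-merge residual is $b_j+v_j$. The loss then splits over samples, $\sum_j \|b_j+v_j\|^2$, with each $v_j$ ranging independently over $\mathcal S$. Hence the minimum $\mathcal J_{\mathcal S}$ is the sum of the per-sample minima, $\sum_j \min_{v\in\mathcal S}\|b_j+v\|^2$.

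\textbf{Step 1: per-sample minimiser.} For fixed $j$, I decompose $b_j = P_{\mathcal S}b_j + (I-P_{\mathcal S})b_j$. For any $v\in\mathcal S$, the vector $P_{\mathcal S}b_j+v$ lies in $\mathcal S$ and is orthogonal to $(I-P_{\mathcal S})b_j$. Pythagoras then gives
\[
\|b_j+v\|^2=\|P_{\mathcal S}b_j+v\|^2+\|(I-P_{\mathcal S})b_j\|^2 .
\]
This is minimised uniquely at $v=-P_{\mathcal S}b_j$, the orthogonal projection of $-b_j$ onto $\mathcal S$. The resulting residual is $(I-P_{\mathcal S})b_j$, as claimed.

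\textbf{Step 2: evaluate the minimum.} Since $P_{\mathcal S}$ is symmetric and idempotent, $\|(I-P_{\mathcal S})b_j\|^2=\|b_j\|^2-\|P_{\mathcal S}b_j\|^2$. Summing over $j$ gives $\mathcal J_{\mathcal S}=\sum_j\|b_j\|^2-\sum_j\|P_{\mathcal S}b_j\|^2$.

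\textbf{Step 3: trace identity.} For the remaining sum, I use cyclicity of the trace together with $P_{\mathcal S}^\top P_{\mathcal S}=P_{\mathcal S}$:
\[
\|P_{\mathcal S}b_j\|^2=b_j^\top P_{\mathcal S}b_j=\operatorname{tr}(b_jb_j^\top P_{\mathcal S}).
\]
Summing over $j$ and applying linearity of the trace gives $\operatorname{tr}(SP_{\mathcal S})$. This yields \eqref{eq:error-trace}. If $P=\{q_1,\dots,q_p\}$ is an orthonormal basis, one can also write $P_{\mathcal S}=\sum_i q_iq_i^\top$ and $\operatorname{tr}(SP_{\mathcal S})=\sum_i q_i^\top S q_i$.

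\textbf{Main subtlety.} The only delicate point is interpretive: the statement must be read in the relaxed sense, where each sample gets its own correction in $\mathcal S$. Under shared coefficients $\mathbf d$, the achievable set of stacked outputs is a subspace of $\mathbb R^{cn}$ rather than a product $\mathcal S^n$. In that case the same projection argument still applies, but the projector acts in the stacked space. The closed form \eqref{eq:error-trace} therefore holds exactly under the relaxation and is a lower bound on the shared-coefficient loss otherwise. I will state this explicitly so the decoupling in the first step is justified.
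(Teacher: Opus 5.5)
Your proof is correct and follows the paper's proof essentially step for step. It takes the per-sample residual $(I-P_{\mathcal S})b_j$, splits it as $\|b_j\|^2-\|P_{\mathcal S}b_j\|^2$, and applies the cyclic-trace identity $\sum_j\|P_{\mathcal S}b_j\|^2=\operatorname{tr}(SP_{\mathcal S})$; your Pythagoras argument and your caveat (the formula is exact only for the per-sample relaxation and a lower bound for shared coefficients) make explicit points the paper only asserts.
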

The proof of Theorem \ref{projection} is provided in Appendix \ref{prof_proof}.
Since \(\operatorname{tr}(S) = \sum_{j \in \mathcal K} \|b_j\|^2\) is the total residual energy, we can normalise \eqref{eq:error-trace} to obtain
\begin{equation}
   \frac{\mathcal J_{\mathcal S}}{\operatorname{tr}(S)}
= 1 - \frac{\operatorname{tr}(S P_{\mathcal S})}{\operatorname{tr}(S)}.
\end{equation}
Thus, \(\operatorname{tr}(S P_{\mathcal S})/\operatorname{tr}(S)\) is a data-dependent diagnostic of how well an output subspace captures residual energy. Since \(\sum_j\|b_j\|^2\) is independent of the choice of \(\mathcal S\), selecting the best output subspace of fixed dimension reduces to maximising the captured energy \(\sum_j\|P_{\mathcal S} b_j\|^2\). This yields a basis-dependent characterisation of the error. We show empirically in Section \ref{exp_energy} that this fraction predicts downstream performance for an image classifier.

\begin{corollary}\label{cor}
Minimising the loss over admissible $p$-dimensional subspaces
$\mathcal S \subseteq \mathbb{R}^c$ is equivalent to maximising $\operatorname{tr}(S P_{\mathcal S})$.
Among all $p$-dimensional subspaces of $\mathbb{R}^c$, the minimum-loss subspace for the relaxed projection problem is given by the span of the top-$p$ eigenvectors of matrix $S$.
\end{corollary}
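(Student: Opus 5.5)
The first claim is a direct consequence of Theorem~\ref{projection}. By \eqref{eq:error-trace}, $\mathcal J_{\mathcal S} = \operatorname{tr}(S) - \operatorname{tr}(S P_{\mathcal S})$, and the first term $\operatorname{tr}(S)=\sum_j\|b_j\|^2$ does not depend on $\mathcal S$. Minimising $\mathcal J_{\mathcal S}$ over any admissible family of $p$-dimensional subspaces is therefore the same as maximising $\operatorname{tr}(S P_{\mathcal S})$ over that family, and the minimisers coincide.

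For the second claim I will write $P_{\mathcal S} = QQ^\top$, where $Q\in\R^{c\times p}$ has orthonormal columns $q_1,\dots,q_p$ spanning $\mathcal S$. Then
\[
\operatorname{tr}(SP_{\mathcal S}) = \operatorname{tr}(Q^\top S Q) = \sum_{i=1}^p q_i^\top S q_i .
\]
The matrix $S=\sum_j b_jb_j^\top$ is symmetric positive semidefinite, so it has an eigendecomposition $S = U\Lambda U^\top$ with $\lambda_1\ge\dots\ge\lambda_c\ge 0$. The problem becomes maximising $\operatorname{tr}(Q^\top S Q)$ over $Q^\top Q = I_p$, which is Ky Fan's maximum principle, with value $\lambda_1+\dots+\lambda_p$. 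I will prove it directly rather than cite it, since this is the only step with any content.

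Set $w_k := \sum_{i=1}^p (u_k^\top q_i)^2 = \|Q^\top u_k\|^2$. Then
\[
\operatorname{tr}(Q^\top S Q)=\sum_{k=1}^c \lambda_k w_k .
\]
The weights satisfy $0\le w_k\le 1$, because $QQ^\top$ is a projector and $u_k$ is a unit vector. They also satisfy $\sum_k w_k = \operatorname{tr}(Q^\top U U^\top Q) = p$. Maximising the linear function $\sum_k\lambda_k w_k$ under these constraints places weight $1$ on the $p$ largest eigenvalues. This can be checked by an exchange argument or by Abel summation using the ordering of the $\lambda_k$. Hence $\operatorname{tr}(SP_{\mathcal S})\le\sum_{k\le p}\lambda_k$.

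Equality is attained by $Q = [u_1,\dots,u_p]$, for which $w_k=1$ when $k\le p$ and $w_k=0$ otherwise. So the span of the top-$p$ eigenvectors of $S$ is a minimum-loss subspace, with loss $\sum_{k>p}\lambda_k$.

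I expect the only real subtlety to be the ties case $\lambda_p=\lambda_{p+1}$. There the optimal subspace is not unique, so the statement should be read as saying that the top-$p$ eigenspan is \emph{a} minimiser, with any choice of eigenvectors in the tied eigenspace equally optimal. I will also note explicitly that this optimality is for the relaxed problem over all $p$-dimensional subspaces of $\R^c$, as the corollary states. It does not claim that every such subspace is realisable by a diagonal mask.
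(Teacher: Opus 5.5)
Your proof is correct and follows the paper's route. The first claim comes from \eqref{eq:error-trace}, because $\operatorname{tr}(S)$ does not depend on $\mathcal S$; the second is Ky Fan's maximum principle, which the paper only cites and you prove with the standard weight argument ($0\le w_k\le 1$, $\sum_k w_k=p$). The step you leave as ``exchange or Abel summation'' closes in one line: $\sum_k\lambda_k w_k-\sum_{k\le p}\lambda_k=\sum_{k\le p}\lambda_k(w_k-1)+\sum_{k>p}\lambda_k w_k\le\lambda_p\bigl(\sum_k w_k-p\bigr)=0$.
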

\begin{proof}
Follows from the Ky-Fan variational characterisation of trace maxima of symmetric PSD matrices over rank-$p$ projectors \citep{eckart_approximation_1936}.
\end{proof}
The minimum-loss subspace identified by Corollary \ref{cor} is defined over all subspaces of $\mathbb{R}^c$. In practice, however, the model parametrisation restricts admissible corrections to a subset of this space that can be realised in the chosen basis.
In our setting, a chosen basis $\{q_i\}_{i=1}^p$ in residual space induces the pointwise output subspace
\[
\mathcal S_{\mathrm{model}} := \mathrm{span}\{L q_1, \ldots, L q_p\}.
\]
This subspace describes the output directions available to corrections expressed in the chosen basis. The projection analysis above therefore provides a basis-dependent measure of how much residual energy these directions can capture.
In general, however, $\mathcal S_{\mathrm{model}}$ may fail to contain the principal residual directions identified by the projection analysis. The component of the residual energy lying outside this pointwise output subspace therefore gives a basis-dependent irreducible error for the relaxed projection problem.
Diagonal masking is optimal for the relaxed projection problem when the output subspace 
$\mathrm{span}\{Le_1 , \ldots , L e_p \}$ coincides with the top-$p$ eigenspace of $S$.
In the special case $L=I$, this reduces to selecting the $p$ coordinates with largest diagonal entries when $S$ is diagonal in the standard basis.
The excess squared-output loss on the calibration set $\mathcal K$ incurred by merging using a $p$-dimensional subspace $\mathcal{S}_{\mathrm{model}}$ instead of the optimal subspace $\mathcal{S}_{\mathrm{opt}}$ is given by
\begin{equation}\label{gap}
  \mathcal J_{\mathcal{S}_{\mathrm{model}}} - \mathcal J_{\mathcal{S}_{\mathrm{opt}}}
=
\operatorname{tr}\big(S (P_{\mathrm{opt}} - P_{\mathrm{model}})\big).  
\end{equation}


This quantity is computable from the calibration set and provides a data-dependent measure of the suboptimality of the chosen basis. 

The residuals define an anisotropic energy distribution in output space whose principal directions are given by the eigenvectors of the residual matrix $S$. Merging therefore corresponds to selecting a subspace that captures as much residual energy as possible. Diagonal masking is efficient but becomes suboptimal when this energy is not axis-aligned; Theorem \ref{projection} quantifies the resulting loss. A corresponding optimisation in a general basis, allowing the basis to minimise this irreducible error, is given in Appendix~\ref{general}.

\section{Experiments}
We compare our theoretical results to empirical benchmarks and demonstrate the use of irreducible error in the chosen basis to predict merging success. A detailed experimental setup is given in Appendix \ref{exp_app}.

We begin with a CLIP-pretrained ViT-B/32 model and fine-tune only the final transformer-block MLP output projection on four downstream datasets: CIFAR-10, STL-10, Imagenette, and EuroSAT. We then merge the resulting task-specific models and evaluate MSE and accuracy on each dataset and on the combined benchmark. 
The QP is solved approximately using 100 gradient steps. To construct the objective, we linearise the task-specific logit map around 100 calibration examples per task using the Jacobian of the downstream normalisation and head with respect to the merge-layer representation.
\begin{figure}[h]
    \centering
        \begin{subfigure}{0.8\linewidth}
        \centering
        \includegraphics[width=\linewidth]{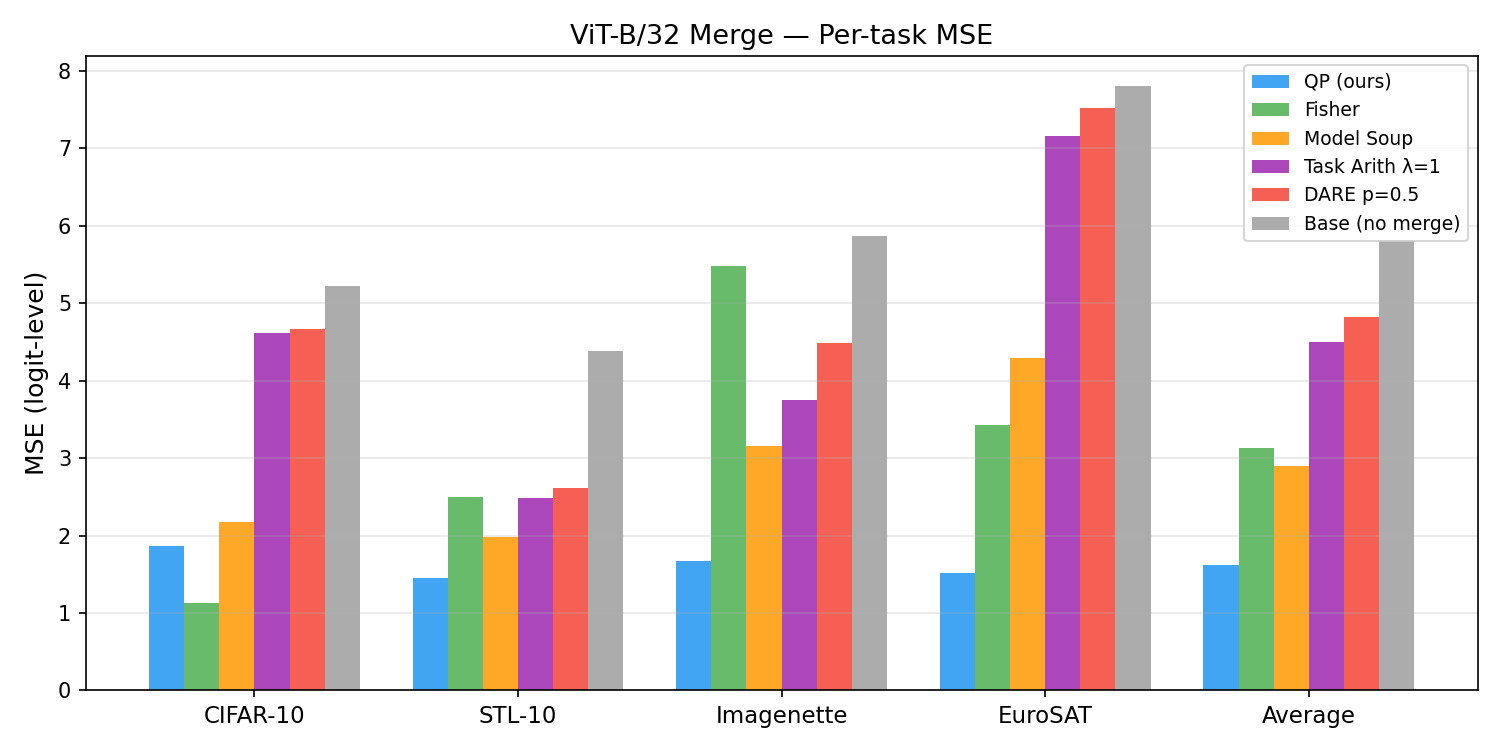}
        \caption{MSE on ViT-B/32.}
        \label{fig:mse_vit2}
    \end{subfigure}\\
    \begin{subfigure}{0.8\linewidth}
        \centering
        \includegraphics[width=\linewidth]{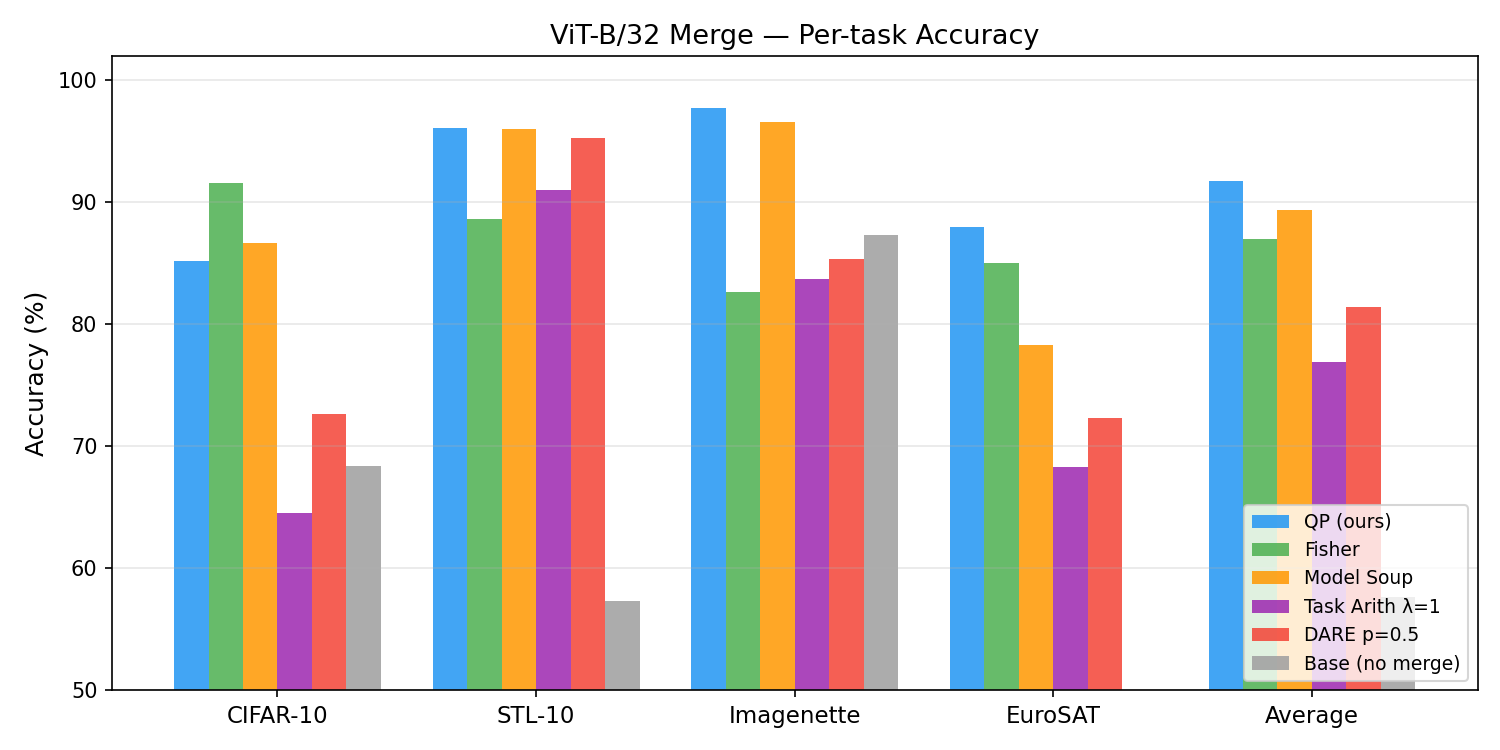}
        \caption{Accuracy on ViT-B/32.}
        \label{fig:acc_vit}
    \end{subfigure}
    \caption{Performance on ViT-32 across single-layer merging methods.}
    \label{fig:vision}
\end{figure}
Figure~\ref{fig:vision} shows that the diagonal QP achieves the lowest MSE across all datasets and the best average accuracy among the methods considered. The cross-entropy results in Figure~\ref{fig:ce_vit} show similar behaviour. Additional single-layer vision experiments on MNIST with a 3-layer MLP are given in App.~\ref{mnist}, an extension to multi-layer merging via sequential application of the QP is given in App. \ref{vit_multi} and experiments on the LLM Llama 3.1 are given in App. \ref{llama_multi}.



\paragraph{Geometry Experiment}\label{exp_energy}

Recall that the residuals on a calibration set define the matrix
\(
S = \sum_j b_j b_j^\top,
\)
where $b_j = h_0(x_j) - y_j$ is the difference between the base model output and the target output. Our theory predicts that, in the relaxed output-subspace problem, the optimal $p$-dimensional subspace is spanned by the top-$p$ eigenvectors of $S$.
We evaluate merging under several choices of basis: the optimal basis (top-$p$ eigenvectors of $S$), the standard basis (diagonal masking), the SVD basis of the residual updates, and random orthonormal bases.
Figure~\ref{fig:energy_mnist} plots the fraction of captured residual energy,
\(
\operatorname{tr}(S P_{\mathcal S}) /\operatorname{tr}(S),
\)
against MSE as the basis dimension $p$ increases. As $p$ increases, the captured residual energy increases and the MSE decreases, consistent with the projection analysis.
Figure~\ref{fig:energy_mnist} also compares different bases on five fine-tuned 3-layer ReLU networks trained on separate MNIST digit classification tasks, merging at the penultimate layer. For each basis, we restrict merging to the same dimension $p$ and solve the corresponding QP. The optimal basis consistently achieves the lowest MSE, while random bases perform substantially worse. The standard diagonal basis is less effective than the optimal basis, but remains competitive and outperforms the benchmark merging heuristics tested, indicating that the diagonal QP provides an efficient proxy when computing the optimal basis is impractical. 
\begin{figure}[b]
        \centering
        \includegraphics[width=.9\linewidth]{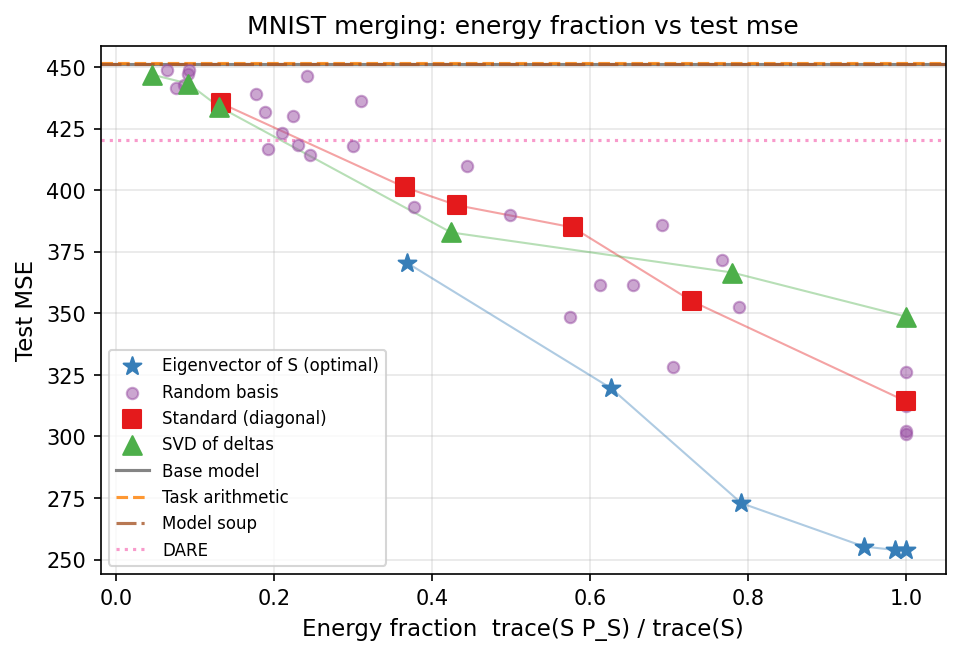}
        \caption{MSE against captured energy for increasing basis dimensions $p$ on MNIST.}
        \label{fig:energy_mnist}
\end{figure}




\subsection{Conclusion}

Our framework is derived for a linearised squared-output objective and a single merged layer; a sequential multi-layer extension is given in Appendix~\ref{sec:multi}. The optimal-basis QP additionally requires an eigendecomposition of the residual matrix $S$, while the cheaper diagonal QP may be suboptimal when residual energy is not axis-aligned. Moreover, the energy-fraction diagnostic is derived from a relaxed projection problem and therefore only approximates attainable downstream loss.
Our experiments primarily illustrate how the framework helps explain when existing merging heuristics succeed, when diagonal merging is sufficient, and when basis misalignment leads to degradation. Extending the framework to non-quadratic losses and developing scalable approximations to the optimal basis remain important directions for future work.

\newpage
\bibliographystyle{plainnat}
\bibliography{bib, references}

\appendix
\onecolumn
\section{Related Work}
\label{sec:related}

\paragraph{Task arithmetic and model soups.}
\citet{ilharco2023editing} introduced task vectors and showed that arithmetic
operations on them transfer to model behaviour.  \citet{wortsman2022model} showed
that averaging fine-tuned weights improves robustness.  Both are recovered as
special cases of our framework.

\paragraph{TIES and DARE.}
\citet{yadav2023ties} introduced sign-based conflict resolution; \citet{yu2023dare}
introduced random dropping of residual parameters.  We show  that DARE with row-uniform masks is a special case of our framework.

\paragraph{Fisher-weighted merging.}
\citet{matena2022merging} weight parameters by their Fisher information.  Our method is conceptually related to precision-weighted merging (Fisher), but operates in the singular basis of residual updates rather than parameter curvature. Fisher merging is a related parameter-space quadratic objective using Fisher precision, rather than an exact instance of our squared-output QP.

\paragraph{RegMean.}
RegMean~\citep{jin2023regmean} computes merge weights by solving a closed-form linear regression at each layer, using second-moment statistics of activations to match the layer-wise outputs of the fine-tuned models. It is a related layer-wise least-squares approach, but operates on parameter-space activation statistics rather than the output-space residual objective optimised here, and we do not claim it as a special case of our framework.

\paragraph{ZipIt!.}
ZipIt!~\citep{stoica_zipit_2024} merges models by identifying and aligning correlated features across networks, effectively constructing a shared representation before combining parameters. In our framework, this can be interpreted as a basis alignment step that transforms residual updates into a space where corresponding directions are more compatible. Unlike diagonal masking, which fixes the basis and optimises coefficients, ZipIt! modifies the representation itself to reduce misalignment between models. Thus, it can be viewed as complementary to our approach: ZipIt! seeks a favourable basis in which merging is easier, while our method optimises merging within a given basis.

\paragraph{Subspace and spectral methods.}
KnOTS aligns LoRA updates by stacking low-rank update matrices, performing an SVD to obtain a shared orthonormal basis, expressing each update in that basis, and merging coefficients in the aligned space. More broadly, this fits a family of low-rank and core-space alignment methods that choose a basis from the residual updates themselves; in our framework these correspond to choosing $\{q_p\}$ from the column space of stacked residuals rather than from the eigenbasis of $S$.

\begin{table}[h]
\centering
\small
\begin{tabular}{l p{3.2cm} p{5.2cm}}
\toprule
\textbf{Method} & \textbf{Basis} & \textbf{Interpretation in QP} \\
\midrule
Task Arithmetic 
& Standard basis $\{e_i\}$ 
& Fixed scalar weights; recovers uniform scaling in diagonal QP \\

Model Soup 
& Standard basis $\{e_i\}$ 
& Uniform averaging; a fixed feasible point of the QP \\

DARE (row-restricted)
& Standard basis $\{e_i\}$ 
& Row-wise stochastic masking; structured sparsity, approximates diagonal QP \\

TIES 
& Standard basis $\{e_i\}$ 
& Sign-based pruning; sparsity constraint, ignores coupling between coordinates \\

Diagonal QP (ours) 
& Standard basis $\{e_i\}$ 
& Optimal weights under diagonal (axis-aligned) restriction \\

Optimal-basis QP (ours) 
& Eigenbasis of $S$ 
& Optimal subspace under quadratic objective \\

SVD-based methods 
& Singular vectors of $\delta^{(k)}$ 
& Basis alignment; partially decouples QP across directions \\

RegMean
& Layer-wise activation statistics
& Related layer-wise least-squares method; not an instance of the output-space QP \\

Fisher merging & Parameter-space Fisher metric & Related quadratic objective; not an exact squared-output QP special case \\

ZipIt! 
& Learned aligned basis 
& Feature alignment; modifies basis before merging \\

\bottomrule
\end{tabular}
\caption{Interpretation of common model merging methods within the proposed QP framework. Exact correspondences and approximations are distinguished in the text.}
\end{table}

\subsection{SVD-Basis}
\label{sec:svd}
Now let us consider setting the basis to the minimal set of singular vectors of the fine-tuned residual weights in the QP. 
We consider the singular value decomposition (SVD) of each of the fine-tuned residuals, and use this to derive optimal scaling weights applied to each singular direction. We begin by optimising for one direction only, then extend to multiple directions.

For each model $i$ let the SVD of the layer-$N$ residual be
\begin{equation}
  \delta_N^{(i)} \;=\; \sum_{\ell=1}^{r_i} \sigma_\ell^{(i)}\, u_\ell^{(i)}\,(v_\ell^{(i)})^\top,
\end{equation}
where $r_i=\rank(\delta_N^{(i)})$.  Fix a unit left singular direction
$u=u_a^{(i)}\in\R^r$ from some model $a$.


\begin{assumption}[Shared direction]
\label{ass:shared}
Suppose there exists a set $T\subseteq[K]$ of models and an index $\ell$ such that every
$i\in S$ shares $u$ as its $\ell$-th left singular vector with the same right
singular vector $v$ and for every $i\in [K] \setminus S$ all singular directions are orthogonal to $u$. We may write 
$\delta_N^{(i)}=\sigma^{(i)} u v^\top + R^{(i)}$
where $u^\top R^{(i)}=0$ for all $i\in T$.
\end{assumption}

\begin{assumption}[Downstream isometry]
\label{ass:isometry}
$L^\top L = I$, i.e.\ the downstream weight product is an isometry.
\end{assumption}
Under Assumption~\ref{ass:shared}, for $i\in T$:
\begin{equation}
  m_i \;=\; \sigma^{(i)}\,(v)^\top Zx \;=:\; \sigma^{(i)}\,s,
\end{equation}
where $s:=(v)^\top Zx\in\R$ is a scalar common to all models.


\begin{theorem}[Signal-to-total-power merging weights]
\label{thm:svd-weights}
Under Assumptions~\ref{ass:shared}--\ref{ass:isometry}, if the target $y_j$ was
produced by model $j\in S$ then
\begin{equation}
  \beta \;=\; (Lu)^\top(h(x;\theta)-y_j)
  \;=\; -u^\top\delta_N^{(j)}Zx
  \;=\; -\sigma^{(j)}\,s.
\end{equation}
and the 1-D optimal mask
weight for model $k\in T$ when merging to match task $j$'s output is
\begin{equation}
  \label{eq:stp}
  d_k^* \;=\; \frac{\sigma^{(j)}\,\sigma^{(k)}}{\displaystyle\sum_{\ell\in S}\bigl(\sigma^{(\ell)}\bigr)^2}.
\end{equation}
\end{theorem}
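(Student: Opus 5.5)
We reduce everything to a one-dimensional least-squares problem along the shared direction $u$. The merge uses only the rank-one component $u u^\top$ (basis element $q=u$), so each model $k$ receives a scalar weight $d_k$. The output correction is then $\sum_k d_k\, L u u^\top \delta_N^{(k)} Z x$. By Assumption~\ref{ass:shared}, $u^\top R^{(k)}=0$ for $k\in T$, so $u^\top\delta_N^{(k)}Zx=\sigma^{(k)}v^\top Zx=\sigma^{(k)}s$. For models outside $T$ all singular directions are orthogonal to $u$, so $u^\top\delta_N^{(k)}=0$ and those models contribute nothing. The correction is therefore $\big(\sum_{k\in T} d_k\sigma^{(k)}\big)s\,Lu$.

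\textbf{Step 1: the identity for $\beta$.} Since $y_j=h^{(j)}(x)$, we have $h(x;\theta)-y_j=-L\delta_N^{(j)}Zx$. Using Assumption~\ref{ass:isometry}, $(Lu)^\top L=u^\top L^\top L=u^\top$, so $\beta=-u^\top\delta_N^{(j)}Zx=-\sigma^{(j)}s$ by the computation above.

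\textbf{Step 2: the one-dimensional loss.} Write $c:=\sum_{k\in T}d_k\sigma^{(k)}$. The part of the loss that depends on the merge is $\|c\,s\,Lu+b\|^2$. Because $\|Lu\|=1$ by the isometry, this equals $c^2s^2+2cs\beta+\mathrm{const}=s^2\big(c-\sigma^{(j)}\big)^2+\mathrm{const}$. This is the projection picture of Theorem~\ref{projection} restricted to $\mathcal S=\mathrm{span}\{Lu\}$: the optimal correction is $-\beta\,Lu$, which forces $c=\sigma^{(j)}$.

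\textbf{Step 3: selecting the stated minimiser.} The condition $c=\sigma^{(j)}$ defines an affine set of minimisers, so the claimed $d^*$ must be a particular point of it. Our reading is that it is the minimum-norm minimiser, i.e.\ the pseudoinverse solution of the rank-one QP with Hessian proportional to $s^2\,\sigma\sigma^\top$, where $\sigma:=(\sigma^{(k)})_{k\in T}$. This gives $d=\sigma^{(j)}\sigma/\|\sigma\|^2$, which is \eqref{eq:stp}. We verify it directly: $\sum_{k\in T} d_k\sigma^{(k)}=\sigma^{(j)}$, and $d$ lies in the row space $\mathrm{span}(\sigma)$, so it is the minimum-norm solution. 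An equivalent route is to add an infinitesimal ridge term or take the limit of gradient descent from $d=0$.

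\textbf{Main obstacle: uniqueness.} The theorem asserts a single optimal weight, so we must make explicit that the minimiser is selected by minimum norm. We must also read the sum over $S$ in \eqref{eq:stp} as a sum over $T$, since the statement uses $S$ and $T$ interchangeably.

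A secondary point is that the argument needs $s\neq0$. If $s=0$, every $d$ is optimal for that input. With several calibration inputs the loss becomes $\sum_x s_x^2\,(c-\sigma^{(j)})^2$, which has the same minimiser set. So the argument extends to a whole calibration set from task $j$, provided $\sum_x s_x^2>0$.
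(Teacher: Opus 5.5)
Your proposal is correct and follows the same route the paper intends. The paper gives no separate proof of this theorem; it obtains \eqref{eq:stp} by substituting $q=u$, $m_k=\sigma^{(k)}s$ and $\beta=-\sigma^{(j)}s$ into the minimum-norm 1-D solution $\mathbf d^\ast=-\beta\,\mathbf m/\|\mathbf m\|^2$ of Proposition~\ref{prop:1d}, which is exactly your Steps 1--3, and your explicit treatment of the minimum-norm selection, the $s\neq0$ requirement, and the $S$/$T$ notational slip makes precise what the statement leaves implicit.
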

When the optimal basis happens to align with the shared singular directions of the residuals, the per-direction QP decomposes exactly and Theorem \ref{thm:svd-weights} gives the closed form solution for each block. 

\begin{corollary}[Recovery of existing methods in the SVD Basis]
\label{cor:recovery}
Under the equal singular value case $\sigma^{(k)}=\sigma$ for all $k\in T$,
$d_k^\ast=1/|S|$, recovering model averaging.  When $|T|=1$ for every choice of $u$ (i.e each model has unique and orthogonal singular vectors to eachother), then $d_k^\ast=1$,
recovering direct addition.
\end{corollary}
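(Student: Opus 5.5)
The corollary follows by plugging the two special cases into the closed form \eqref{eq:stp} of Theorem~\ref{thm:svd-weights}, which I take as given. I read the index set $S$ appearing in Assumption~\ref{ass:shared} and in \eqref{eq:stp} as the same set $T$ of models sharing the direction $u$; this seems to be a notational slip in the statement. The denominator is then $\sum_{\ell\in T}(\sigma^{(\ell)})^2$. I also assume the target-producing model $j$ lies in $T$, as Theorem~\ref{thm:svd-weights} requires.

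\textbf{Equal singular values.} Suppose $\sigma^{(k)}=\sigma$ for all $k\in T$; in particular $\sigma^{(j)}=\sigma$. The numerator of \eqref{eq:stp} is then $\sigma^2$ and the denominator is $|T|\sigma^2$. Hence $d_k^\ast=1/|T|$ for every $k\in T$. This is the uniform weight $D_k=(1/|T|)I$ restricted to the direction $u$, i.e.\ model averaging (the model-soup point of Section~\ref{qp}). I will also check that $\sigma\neq 0$, which holds because $u$ is an actual singular direction of positive singular value.

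\textbf{Singleton sharing sets.} Suppose $|T|=1$ for every choice of $u$. Then $T=\{j\}$, and the only coefficient in \eqref{eq:stp} is $d_j^\ast=\sigma^{(j)}\sigma^{(j)}/(\sigma^{(j)})^2=1$. All other models have singular directions orthogonal to $u$. By Assumption~\ref{ass:isometry} together with orthogonality, their contributions along $Lu$ vanish, so the 1-D QP for direction $u$ involves only model $j$. Because this holds for every $u$, the per-direction QP decouples completely. Each residual's own singular directions then receive weight $1$, so the merged residual is $\sum_k \delta_N^{(k)}$, which is direct addition (task arithmetic with $\lambda_k=1$).

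\textbf{Main subtlety.} The arithmetic is trivial. The step needing care is justifying that the per-direction blocks really decouple, so that the 1-D weights assemble into a global mask equal to $I/|T|$ or $I$ on the relevant subspace. I would argue this from the isometry $L^\top L=I$ and the mutual orthogonality of the directions $u$. Under these, the Hessian $H$ is block diagonal in the SVD basis, as remarked after Theorem~\ref{thm:svd-weights}.
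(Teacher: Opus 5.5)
Your proposal is correct and is the paper's argument: the corollary is direct substitution into \eqref{eq:stp}, which the paper treats as immediate, and reading the index set $S$ as $T$ with $j\in T$ is the right repair. One caution on your extra assembly step: block-diagonality of $H$ under $L^\top L=I$ is correct but does not by itself give weight $1$ to every model's directions at once. For a fixed target task $j$, a direction $u$ owned by a model $k\neq j$ has $\beta=-u^\top\delta_N^{(j)}Zx=0$ and hence $d^\ast=0$; on a mixed-task calibration set the block optimum is $\sum_{\text{task }k}s^2/\sum_{\text{all}}s^2$, generally below $1$. So ``direct addition'' holds only in the per-direction sense, where each $u$ is matched to its own model's output, and this is how the corollary should be read.
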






\section{Additional Proofs}\label{proofs}

\subsection{Proof of Theorem \ref{projection}}
\label{prof_proof}
\begin{proof}
    Let the set of all output corrections the
parametrisation can produce be denoted by \(\mathcal S\subseteq\R^c\) for the whole validation set (i.e. the range of
the linear map from coefficients to stacked outputs) and
\(P_{\mathcal S}\) is the orthogonal projector onto \(\mathcal S\).  The optimal
correction inside \(\mathcal S\) is the orthogonal projection of \(-b_j\) onto
\(\mathcal S\), hence the residual after merging for sample \(j\) is
\((I-P_{\mathcal S})b_j\).  Therefore the attained squared-output loss over validation set
\(\cK\) is
\begin{equation}
  \label{eq:error-proj}
  \cJ_{\mathcal S}
  \;=\;
  \sum_{j\in\cK} \big\|(I-P_{\mathcal S}) b_j\big\|^2
  \;=\;
  \sum_{j\in\cK} \big( \|b_j\|^2 - \|P_{\mathcal S} b_j\|^2 \big).
\end{equation}
Since \(\sum_j\|b_j\|^2\) is independent of the choice of \(\mathcal S\), the
problem of selecting the best output subspace \(\mathcal S\) of fixed
dimension reduces to maximising the captured energy
\(\sum_j\|P_{\mathcal S} b_j\|^2\).  Using the cyclicity of trace we can
rewrite the captured energy in terms of \(S\) and the projector:
\begin{equation}
  \label{eq:trace-form}
  \sum_{j\in\cK} \|P_{\mathcal S} b_j\|^2
  \;=\;
  \sum_{j\in\cK} \operatorname{tr}\big( b_j b_j^\top P_{\mathcal S}\big)
  \;=\;
  \operatorname{tr}\big(S\,P_{\mathcal S}\big).
\end{equation}
Hence we obtain that the minimal loss attainable on $\mathcal S$ is
\begin{equation}\nonumber
  \cJ_{\mathcal S} \;=\; \sum_{j\in\cK}\|b_j\|^2 \;-\; \operatorname{tr}(S\,P_{\mathcal S}),
\end{equation}
and the loss difference between two subspaces is:
\[
\mathcal J_{P_2} - \mathcal J_{P_1}
= \operatorname{tr}(S (P_1 - P_2)).
\]
\end{proof}

\section{Single Direction General Basis Case}

First we consider a single direction $q$.
Define the \emph{scalar projection}
of model $i$'s residual activation onto $q$ for input $x$:
\begin{equation}
  m_i \;:=\; q^\top\delta_N^{(i)}Zx \;\in\;\R,
\end{equation}
and the projection of the base-model residual onto the downstream direction $Lq$:
\begin{equation}
  \beta \;:=\; (Lq)^\top b, \qquad b \;:=\; h(x;\theta)-y.
\end{equation}
Restricting the objective to the component of outputs along $q$, the 1-D
objective for a single data point $(x,y)$ is
\begin{equation}
  \label{eq:1d-obj}
  \cJ(\mathbf d) \;=\; \Bigl(\sum_{i=1}^K d_i\,m_i + \beta\Bigr)^2
  \;=\; \bigl(\mathbf d^\top \mathbf m + \beta\bigr)^2,
\end{equation}
with $\mathbf d=(d_1,\dots,d_K)^\top$ and $\mathbf m=(m_1,\dots,m_K)^\top\in\R^K$.

\begin{proposition}[1-D optimal mask]
\label{prop:1d}
If $\mathbf m \neq 0$, the minimum-norm minimiser of \eqref{eq:1d-obj}
over $\mathbf d\in\R^K$ is
\begin{equation}
  \label{eq:1d-sol}
  \mathbf d^\ast \;=\; -\frac{\beta\, \mathbf m}{\norm{\mathbf m}^2}.
\end{equation}
If $\mathbf m = 0$, the objective is constant and every $\mathbf d$ is optimal.
\end{proposition}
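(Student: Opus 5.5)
The plan is to treat \eqref{eq:1d-obj} as a rank-one least-squares problem, $\min_{\mathbf d}\|\mathbf m^\top \mathbf d + \beta\|^2$, with the $1\times K$ design matrix $\mathbf m^\top$. I will characterise the full set of minimisers and then pick out the one of minimum Euclidean norm.

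First I dispose of the degenerate case. If $\mathbf m = 0$, then $\mathbf d^\top\mathbf m + \beta = \beta$ for every $\mathbf d$. The objective is therefore the constant $\beta^2$, and every $\mathbf d$ is optimal, as claimed.

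Now assume $\mathbf m \neq 0$. Because the objective is the square of a scalar affine function, it is nonnegative. The affine function $\mathbf d \mapsto \mathbf d^\top\mathbf m + \beta$ is surjective onto $\R$, since for instance $\mathbf d = t\,\mathbf m$ gives the value $t\|\mathbf m\|^2+\beta$. Hence the minimum value is $0$, and the set of minimisers is exactly the affine hyperplane
\[
\mathcal H := \{\mathbf d \in \R^K : \mathbf m^\top \mathbf d = -\beta\}.
\]
Checking the stationarity condition $2(\mathbf d^\top\mathbf m+\beta)\,\mathbf m = 0$ gives the same set, so no other minimisers are missed.

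Next I find the minimum-norm point of $\mathcal H$. Any $\mathbf d\in\mathcal H$ decomposes as $\mathbf d = \alpha\,\mathbf m + \mathbf w$ with $\mathbf w \perp \mathbf m$. The constraint forces $\alpha\|\mathbf m\|^2 = -\beta$, so $\alpha = -\beta/\|\mathbf m\|^2$ is the same for every point of $\mathcal H$. By Pythagoras, $\|\mathbf d\|^2 = \alpha^2\|\mathbf m\|^2 + \|\mathbf w\|^2$, which is minimised uniquely at $\mathbf w = 0$. This gives $\mathbf d^\ast = -\beta\,\mathbf m/\|\mathbf m\|^2$, which is \eqref{eq:1d-sol}. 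Equivalently, $\mathbf d^\ast = -(\mathbf m^\top)^{+}\beta$, using the pseudoinverse $(\mathbf m^\top)^{+} = \mathbf m/\|\mathbf m\|^2$.

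There is no real obstacle in this argument. The only point needing care is to say explicitly that in this one-dimensional problem the minimiser is non-unique when $K>1$, which is why the statement specifies the minimum-norm solution. Uniqueness of that solution then follows from the orthogonal decomposition above.
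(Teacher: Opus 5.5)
Your proof is correct and complete. The paper itself gives no proof of this proposition. Its only justification is the later remark that the result is the case $G=I$, $p=1$ of Proposition~\ref{thm:general-qp}. For a single sample that case gives $H=2\mathbf m\mathbf m^\top$ and $\mathbf f=2\beta\mathbf m$, so $\mathbf f\in\operatorname{Range}(H)$. The minimiser set is then $-H^{+}\mathbf f+\ker H$, with $-H^{+}\mathbf f=-\beta\mathbf m/\|\mathbf m\|^2$.

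Your direct route differs: surjectivity of $\mathbf d\mapsto\mathbf m^\top\mathbf d+\beta$ gives the hyperplane of minimisers, and the decomposition $\mathbf d=\alpha\mathbf m+\mathbf w$ with Pythagoras singles out the minimum-norm point. This makes explicit two things the pseudoinverse route leaves implicit. First, the optimal value is $0$. Second, $-H^{+}\mathbf f$ is the minimum-norm element, which needs $H^{+}\mathbf f\in\operatorname{Range}(H)=(\ker H)^{\perp}$ with $\ker H=\mathbf m^{\perp}$; that is exactly your Pythagoras step.

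Your argument is the more elementary and self-contained of the two for this statement. The pseudoinverse view is the one that generalises to several samples or directions. There the minimum is typically positive and the minimiser set is an affine subspace of codimension $\operatorname{rank}(H)$ rather than a hyperplane.
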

Thus we see that the diagonal mask value is given by an energy weighting of the fine-tuning in this direction on a given input as well as the back-propagated energy in the output from that direction.

\section{Experimental Set-Up}\label{exp_app}

\paragraph{Setup:} We implement our merging experiments using Python 3.12.3 and PyTorch 2.7.0 that supports CUDA 12.8 for accelerating computations by using GPUs. We run our experiments on a machine equipped with an Intel Xeon Gold 5418Y 2.00GHz 24-core processor, 1.5TiB of RAM, and four NVIDIA H100 NVL GPUs.

\subsection{Datasets}\label{datasets}
Here we give an overview of the datasets used.

\paragraph{CIFAR-10}
CIFAR-10 is a balanced image classification benchmark comprising 60{,}000 colour images drawn from ten object categories (airplane, automobile, bird, cat, deer, dog, frog, horse, ship, truck) \cite{krizhevsky_learning_nodate}. We use the standard split of 50{,}000 training images (5{,}000 per class) and 10{,}000 test images (1{,}000 per class). Each image is an RGB photograph of fixed size $32 \times 32$ pixels, giving 1{,}024 pixels per image and, with three colour channels, 3{,}072 channel values in total. Pixel intensities are stored as 8-bit unsigned integers in the range $[0, 255]$. 

\paragraph{STL-10}
STL-10 \cite{coates_analysis_nodate} is an image recognition benchmark comprising 10 object classes (airplane, bird, car, cat, deer, dog, horse, monkey, ship, truck). It contains colour images of size $96 \times 96$ pixels, with 5{,}000 labelled training images (500 per class), 8{,}000 test images (800 per class), and an additional 100{,}000 unlabelled images for unsupervised learning.

\paragraph{Imagenette}
Imagenette is a smaller subset of 10 easily classified ImageNet \cite{deng_imagenet_2009} classes designed for rapid experimentation. We use the standard v2 split, which contains 9{,}469 training images and 3{,}925 validation images in the full-size variant. The 160 px and 320 px variants resize the shortest side to the chosen resolution while preserving aspect ratio.

\paragraph{EuroSAT}
EuroSAT \cite{helber_eurosat_2017} is a land use and land cover classification benchmark based on Sentinel-2 satellite imagery. It consists of 27{,}000 labelled and geo-referenced image patches across 10 classes, with each patch measuring $64 \times 64$ pixels. The dataset is available in both RGB and multispectral forms; the multispectral version covers 13 spectral bands.

\paragraph{MMLU}
Massive Multitask Language Understanding (MMLU) is a holistic benchmark designed to measure world knowledge and problem-solving across 57 subjects spanning STEM, the humanities, the social sciences, and more. The tasks range in difficulty from elementary to advanced professional levels. We evaluate models using 5-shot accuracy on the test set, providing a measure of the model's broad multitask capabilities and general reasoning.

\paragraph{GSM8K}
The Grade School Math 8K (GSM8K) dataset \cite{cobbe_training_2021} consists of 8.5K high-quality, linguistically diverse math word problems. These problems require multi-step reasoning using basic arithmetic operations. We use the 1,000 held-out test samples to evaluate mathematical reasoning performance via accuracy, measuring the model's ability to reach the correct numerical conclusion through interpretable natural language steps.

\paragraph{MBPP}
The Mostly Basic Python Problems (MBPP) dataset \cite{chen_evaluating_2021} contains 974 crowd-sourced programming tasks designed to be solvable by entry-level programmers. Each task includes a natural language description, a Python solution, and three automated test cases. We evaluate code generation proficiency using the sanitized test split via the pass@1 metric, ensuring the generated functions pass all functional test cases.

\paragraph{AlpacaEval}
AlpacaEval \cite{dubois_length-controlled_2024} is an automatic benchmark for instruction-following language models, consisting of 805 instructions representative of real-world user interactions. While typically used with an LLM-based "win-rate" judge, we follow a more granular approach in this setting by scoring the reference completions using negative perplexity. This metric treats the human-written or GPT-4 reference responses as a target distribution, where higher values indicate a better alignment with the expected instruction-following behavior.

\paragraph{OpenMathInstruct-2}
OpenMathInstruct-2 is a massive scale mathematical instruction tuning dataset containing approximately 14M question-solution pairs derived from 600K unique questions. It was constructed using Llama 3.1 models as teachers to generate high-quality synthetic reasoning chains. This dataset serves as the fine-tuning source for the \texttt{nvidia/OpenMath2-Llama3.1-8B} model used in our math setting.

\subsection{Models}

\paragraph{ViT-B/32}
ViT-B/32 \cite{dosovitskiy_image_2021} is a Vision Transformer model for image recognition that applies a pure transformer directly to sequences of image patches, rather than using convolutional layers. It was shown to perform strongly on transfer learning benchmarks when pre-trained at scale.



\paragraph{Llama-3.1-8B (Base)}
Llama-3.1-8B is a decoder-only transformer from Meta's Llama 3.1 family of foundation language models. We use the pretrained Hugging Face checkpoint \texttt{meta-llama/Llama-3.1-8B} as the common base model for all downstream merging runs.

\paragraph{OpenMath2-Llama3.1-8B (Math)}
\texttt{nvidia/OpenMath2-Llama3.1-8B} ~\citep{toshniwal2024openmathinstruct} is a model obtained by fine-tuning Llama 3.1 8B on OpenMathInstruct-2. It is specifically optimized for mathematical reasoning and problem-solving through high-quality synthetic data.

\paragraph{DeepSeek-R1-Distill-Llama-8B (Code)}
\texttt{deepseek-ai/DeepSeek-R1-Distill-Llama-8B} ~\citep{guo2025deepseek} is a distilled Llama 3.1 8B checkpoint trained on reasoning data generated by DeepSeek-R1. It inherits advanced algorithmic logic and code generation capabilities from its larger teacher model.

\paragraph{Llama-3.1-8B-Instruct (Instruct)}
\texttt{meta-llama/Llama-3.1-8B-Instruct} is Meta's official instruction-tuned Llama 3.1 8B model. It is optimized for instruction-following quality and human-aligned dialogue through supervised fine-tuning and reinforcement learning.

\subsection{Training}

\paragraph{Vision Training Pipeline.}
For the vision experiments, we start from a pretrained ViT-B/32 CLIP backbone and fine-tune separate task-specific models on CIFAR-10, STL-10, Imagenette, and EuroSAT. The training pipeline uses CLIP normalisation, random resized crops and horizontal flips for training, and centre cropping for evaluation. We fine-tune for 15 epochs with AdamW, a learning rate of $5\times 10^{-5}$, weight decay $10^{-4}$, batch size 64, and a cosine annealing schedule.

\paragraph{LLM training pipeline.}
We start from the pretrained \texttt{Llama-3.1-8B} base model and merge three already fine-tuned task checkpoints: \texttt{nvidia/OpenMath2-Llama3.1-8B} for maths, \texttt{deepseek-ai/DeepSeek-R1-Distill-Llama-8B} for code, and \texttt{meta-llama/Llama-3.1-8B-Instruct} for instruction following. For each task, we construct a calibration set of prompt-only examples (30 per task in fast mode, 100 in full mode) and evaluate the merged models using teacher-forced MSE and cross-entropy on prompt-completion pairs, together with GSM8K exact-match accuracy, MBPP pass@1, and IFEval strict accuracy as task-level metrics.

\paragraph{QP construction and optimisation.}
To keep the residual updates small and consistent with the linearisation used in the theory, we update only the penultimate projection layer and the classification head during fine-tuning. For calibration, we use 100 examples per task and construct the quadratic objective from the task-specific base model, the task-specific head, and the Jacobian of the downstream normalisation-plus-head map at the merge-layer representation. In the implementation, the resulting QP is solved approximately by iterative gradient descent for 500 steps, initialised from model soup, rather than by an exact solver. This gives a much cheaper optimisation procedure in the high-dimensional coefficient space while preserving the behaviour needed for the merge experiments.

\paragraph{Merge strategies and optimisation.}
We compute task deltas relative to the base model and solve the diagonal QP by Adam on the merge coefficients, clamping them to $[0,1]$. In the full setting, the QP is applied to every linear layer with 1{,}000 Adam steps in fast mode (3{,}000 in full mode); in the penultimate-layer setting, it is restricted to \texttt{model.layers.30.mlp.down\_proj}; and in the sequential setting, the QP is applied layer by layer, updating the model after each layer so that subsequent layers optimise over the partially merged activations. For comparison, we also evaluate model soup, task arithmetic, DARE, and Fisher-weighted merging on the same deltas and calibration data.

\subsection{Extra Results}

We begin with a CLIP pretrained ViT-B/32 base model~\citep{dosovitskiy2020image, radford2021learning} and fine-tune only the the final transformer block's MLP output projection, keeping all other backbone weights frozen. We finetune on four diverse downstream datasets (CIFAR-10, STL-10, Imagenette, and EuroSAT) to obtain four task-specific models. We then merge these models and evaluate performance using MSE and accuracy on each dataset individually, as well as on the combined dataset. In all experiments, the resulting QP is solved approximately using 100 steps of gradient descent, which is substantially cheaper than an exact solve and works well in the overparameterised setting. For the QP construction, we linearise the task-specific logit map around the calibration examples by computing the Jacobian of the downstream normalisation and head with respect to the representation at the merge layer, and use this local linearisation to form the output-space quadratic objective. For calibration, we use 100 examples per task to construct the quadratic objective; larger calibration sets may further improve performance.
\subsection{ViT-32 Single-Layer}

\begin{figure}[h]
    \centering
    \includegraphics[width=0.5\linewidth]{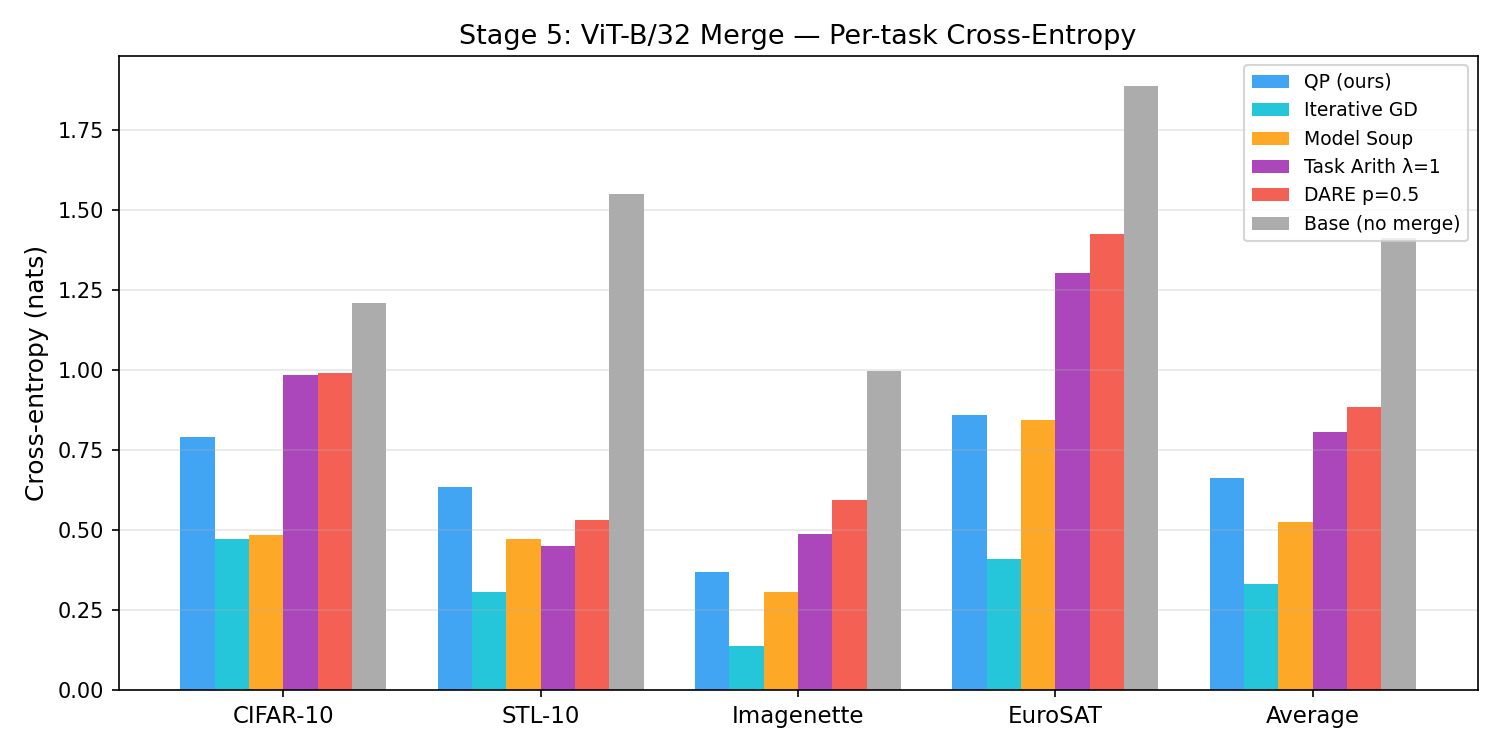}
    \caption{Vit-32 cross-entropy across benchmarks.}
    \label{fig:ce_vit}
\end{figure}



\subsection{ViT-32 Multi-Layer}\label{vit_multi}
We next extend the setup to sequential multi-layer merging over both linear projections in the final transformer block MLP, fine-tuning these together with the task head. Following Appendix~\ref{sec:multi}, we solve the QP sequentially across the two layers while accounting for the residual connection in block 11. At each step, the Jacobian is recomputed from the current partially-merged model: for \texttt{fc1} it includes the path through activation, \texttt{fc2}, the residual addition, normalisation, and head; for \texttt{fc2} it includes the residual addition, normalisation, and head only. As shown in Figure~\ref{fig:vision_multi}, QP outperforms the benchmark methods in both MSE and accuracy across all datasets, while the remaining methods perform substantially worse.
\begin{figure}[t]
    \centering
        \begin{subfigure}{0.49\linewidth}
        \centering
        \includegraphics[width=\linewidth]{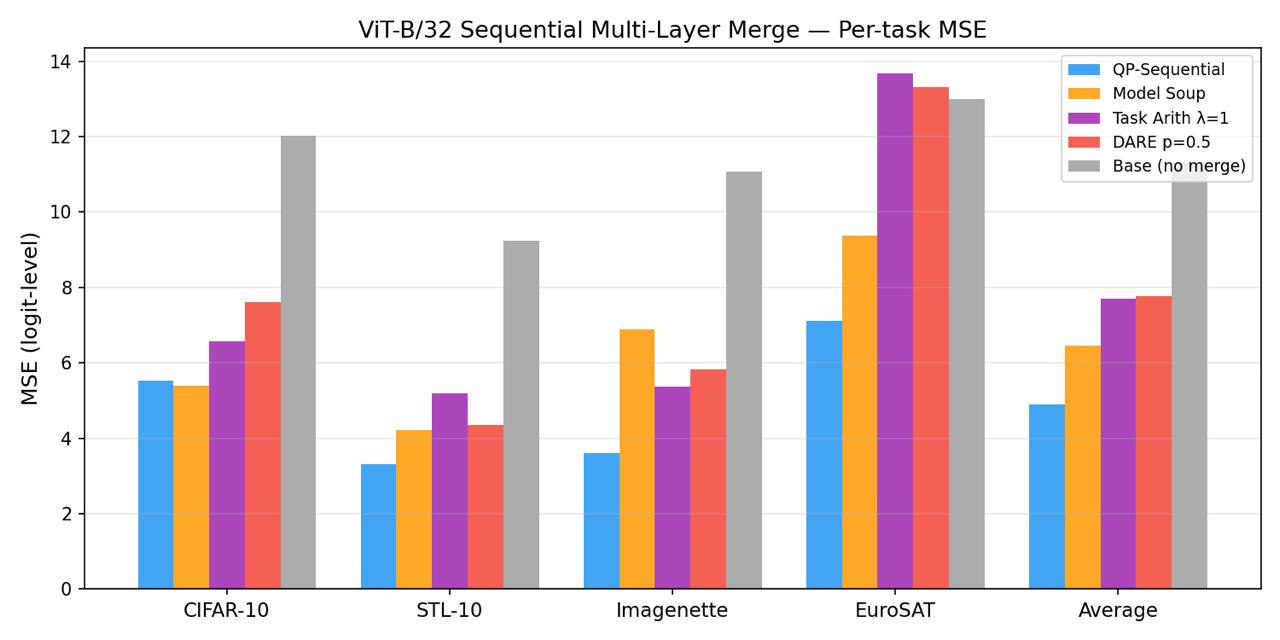}
        \caption{MSE on ViT-B/32.}
        \label{fig:mse_vit_multi}
    \end{subfigure}
    \hfill
    \begin{subfigure}{0.49\linewidth}
        \centering
        \includegraphics[width=\linewidth]{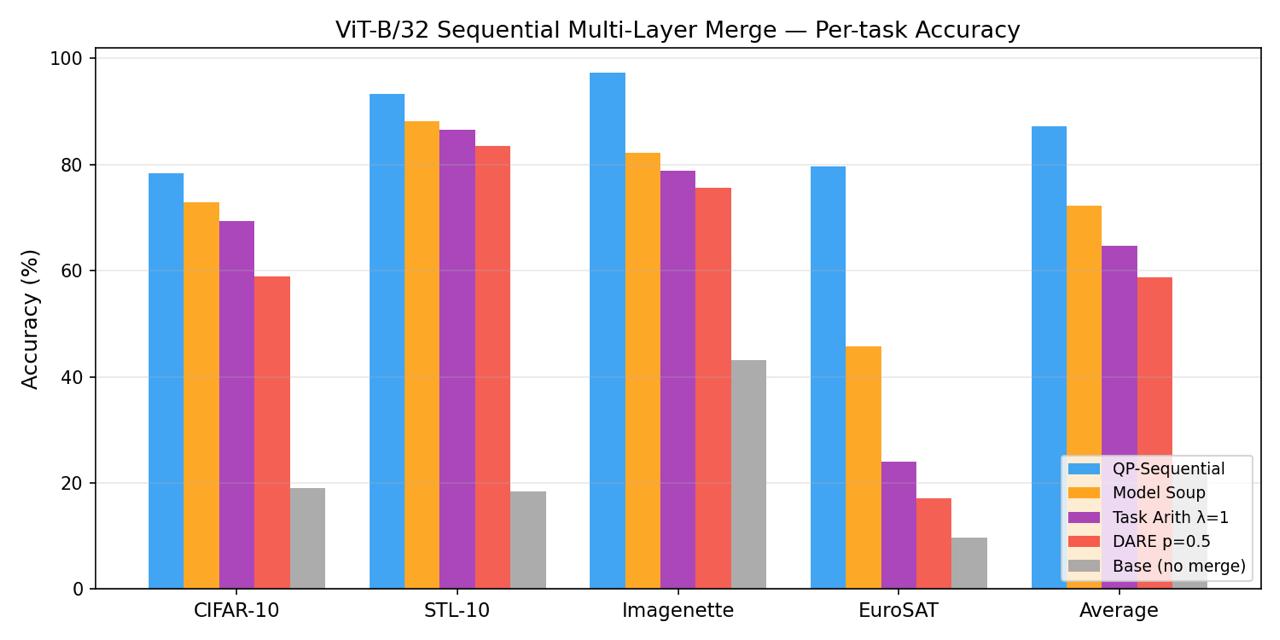}
        \caption{Accuracy on ViT-B/32.}
        \label{fig:acc_vit_multi}
    \end{subfigure}
    \caption{Performance on ViT-32 across sequential multi-layer merging.}
    \label{fig:vision_multi}
\end{figure}

\subsection{MNIST MLP}\label{mnist}

3-layer MLP with ReLU activations trained on MNIST, split into $K=5$ non-overlapping digit-pair tasks: $\{0\text{-}1, 2\text{-}3, 4\text{-}5, 6\text{-}7, 8\text{-}9\}$. The base model is a $\text{Linear}(784, 256) \to \text{ReLU} \to \text{Linear}(256, 128) \to \text{ReLU} \to \text{Linear}(128, 10)$ network pretrained on all digits, with each task-specific model fine-tuned only on its two-digit subset. Merging is performed at \texttt{layer2} (the $256 \to 128$ linear layer), so the downstream network $L$ consists of a ReLU nonlinearity followed by the final linear classifier—exercising the full linearisation path of our method.

The QP is built from $n=100$ calibration samples per task ($n_{\text{total}}=500$), with targets set to the fine-tuned model's logits. The learned per-row blending weights $\mathbf{d}^{\star} \in [0,1]^{K \times r}$ ($r=128$) are optimised via Adam projected gradient descent for 500 steps with learning rate $10^{-2}$, initialised at the uniform soup solution.

\begin{figure}[h]
    \centering
        \begin{subfigure}{0.7\linewidth}
        \centering
        \includegraphics[width=\linewidth]{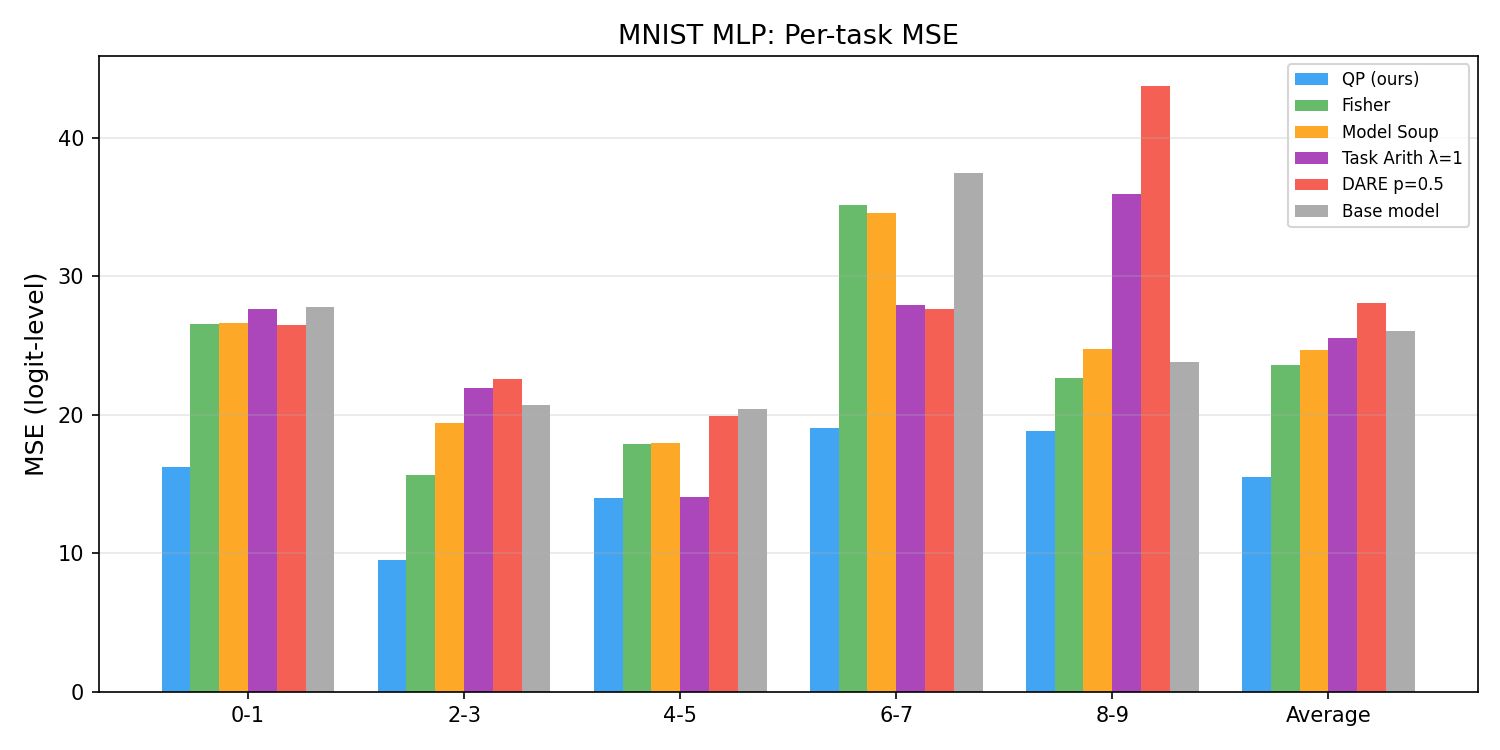}
        \caption{Per-task MSE (logit-level) against fine-tuned model outputs on the held-out test set.}
        \label{fig:mse_mnist}
    \end{subfigure}\\
    \begin{subfigure}{0.7\linewidth}
        \centering
        \includegraphics[width=\linewidth]{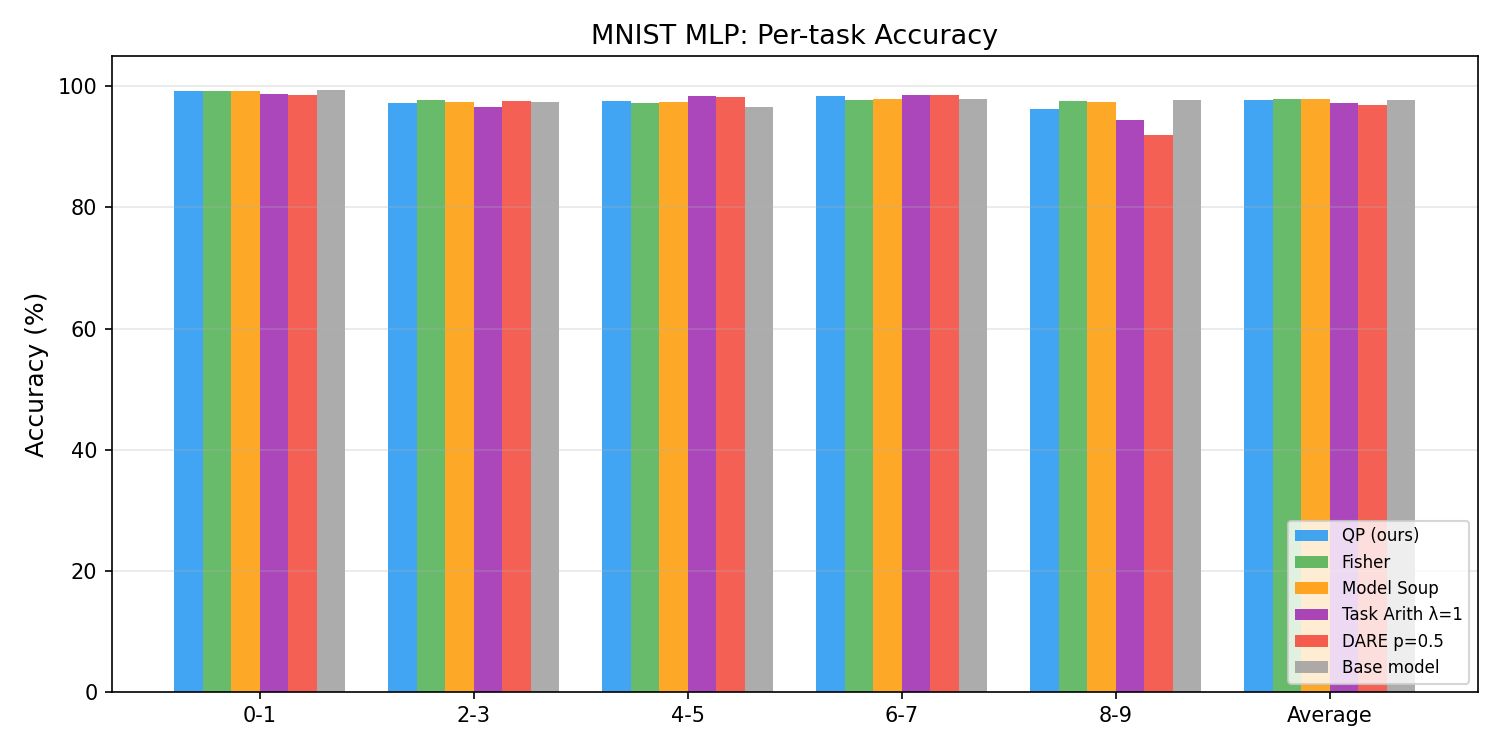}
        \caption{Per-task classification accuracy on the held-out test set.}
        \label{fig:acc_mnist}
    \end{subfigure}
    \caption{Performance of single-layer merging methods on a 3-layer MNIST MLP across five digit-pair tasks. Our QP method learns task-adaptive row-wise blending weights for \texttt{layer2}, while baselines (Model Soup, Task Arithmetic $\lambda{=}1$, DARE $p{=}0.5$, Fisher merging) apply fixed or heuristic mixing. The base model (no merging) is included as a lower bound.}
    \label{fig:mnist}
\end{figure}

Despite the nonlinearity introduced by the ReLU between \texttt{layer2} and the output, the first-order linearisation underlying our method provides a useful local approximation of how weight perturbations propagate to logits. The Jacobian $L$ is computed via JVP evaluated at the base model's \texttt{layer2} pre-activations, and the resulting quadratic programme recovers a merged model that matches or improves upon the model soup in average accuracy across tasks, while reducing logit-level MSE relative to the per-task fine-tuned models.

Results for this MNIST MLP are shown in Figure~\ref{fig:mnist}, showing the MSE loss for the quadratic programme always beats other methods.

\pagebreak
\subsection{LLaMA 3.1 Single Layer}

Now we consider an LLM. We start from the pretrained Llama 3.1 8B base model and merge three task-specific finetunes for code, instruct, and maths. Details of the models and datasets are given in Appendix~\ref{datasets}. In the single-layer setting shown in Figure~\ref{fig:llama}, QP attains the lowest MSE across the benchmarks, including relative to Fisher merging, although Fisher achieves higher code accuracy. This reflects the fact that lower MSE does not always correspond to better task accuracy, a limitation we discuss later.

\begin{figure}[h]
    \centering
        \begin{subfigure}{0.8\linewidth}
        \centering
        \includegraphics[width=\linewidth]{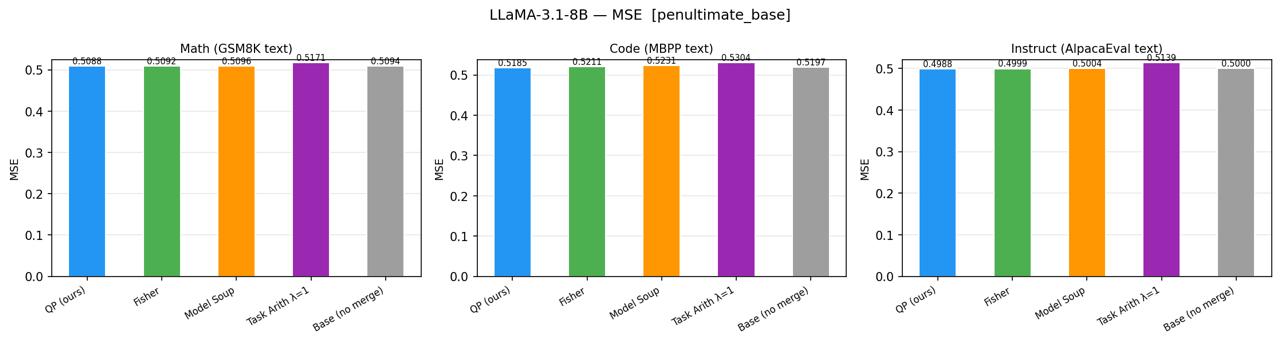}
        \caption{MSE on LLaMA 3.1.}
        \label{fig:mse_llama}
    \end{subfigure}\\
    \begin{subfigure}{0.8\linewidth}
        \centering
        \includegraphics[width=\linewidth]{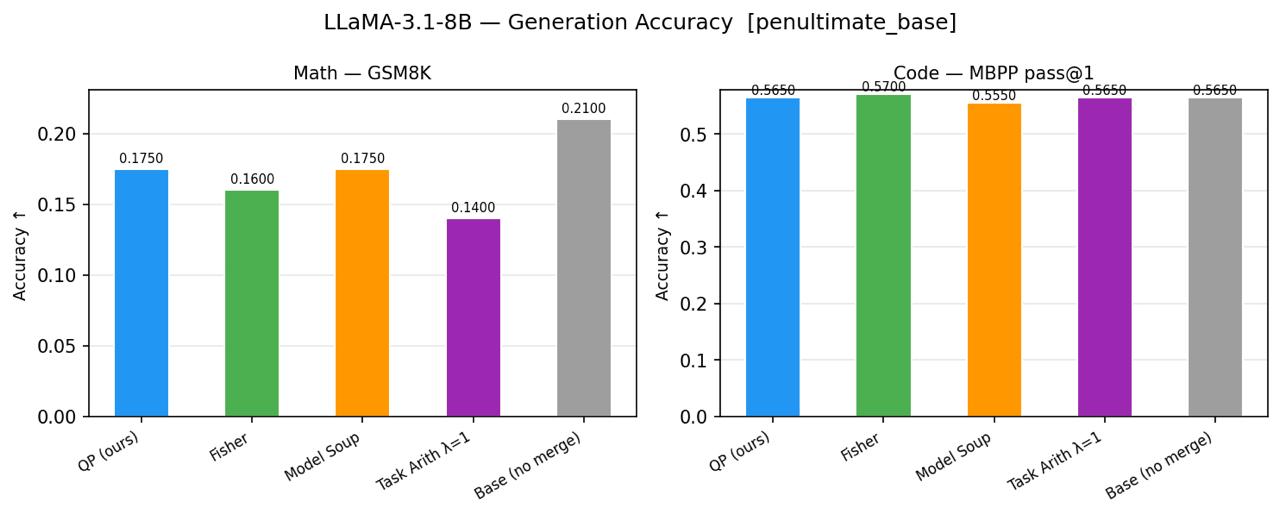}
        \caption{Accuracy on LLaMA 3.1.}
        \label{fig:acc_llama}
    \end{subfigure}
    \caption{Performance on LLaMA 3.1 across penultimate-layer merging.}
    \label{fig:llama}
\end{figure}

\newpage

\subsection{LLaMA 3.1 Multi-Layer Sequential}\label{llama_multi}

We extend this to multi-layer merging using the sequential scheme from Section~\ref{sec:multi}, merging all linear layers in the model. Results in Figure~\ref{fig:llama_multi} show that the diagonal QP achieves the lowest MSE overall and the highest accuracy on two of the three tasks.

\begin{figure}[h]
    \centering
        \begin{subfigure}{0.7\linewidth}
        \centering
        \includegraphics[width=\linewidth]{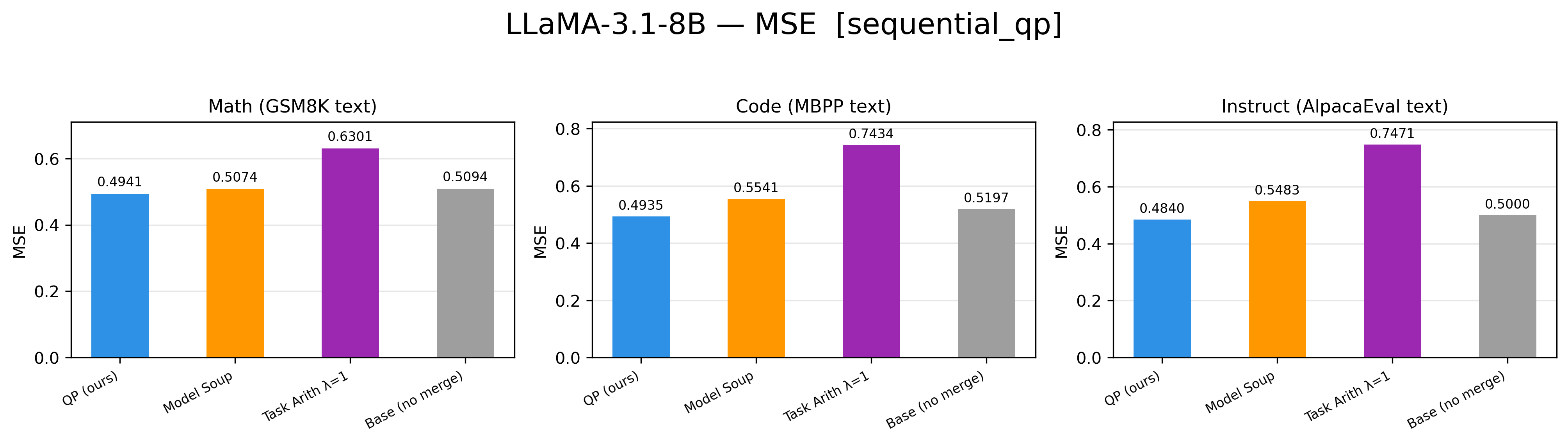}
        \caption{MSE on LLaMA 3.1.}
        \label{fig:mse_llama_multi}
    \end{subfigure}\\
    \begin{subfigure}{0.7\linewidth}
        \centering
        \includegraphics[width=\linewidth]{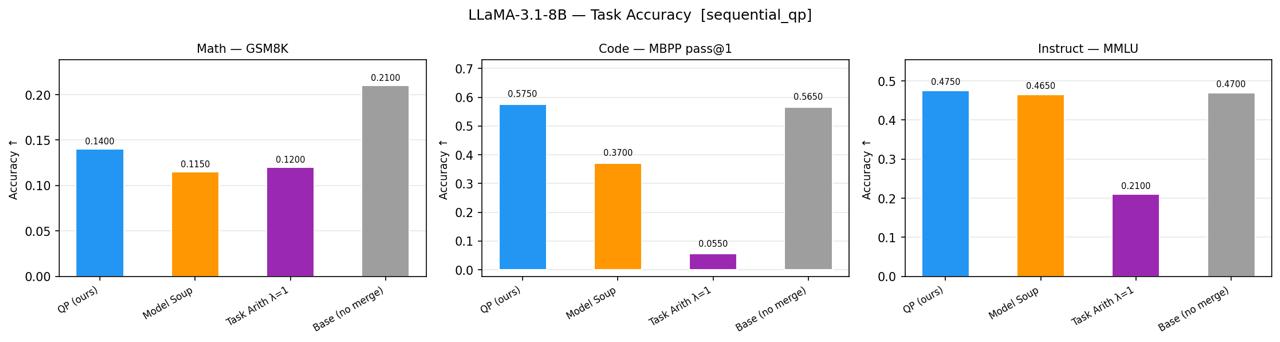}
        \caption{Accuracy on LLaMA 3.1.}
        \label{fig:llama_seq_mse}
    \end{subfigure}
    \caption{Performance on LLaMA 3.1 across sequential multi-layer merging.}
    \label{fig:llama_multi}
\end{figure}

We present the cross-entropy results for the multi-layer sequential merge on LLaMA 3.1.

\begin{figure}[h]
    \centering
    \includegraphics[width=0.99\linewidth]{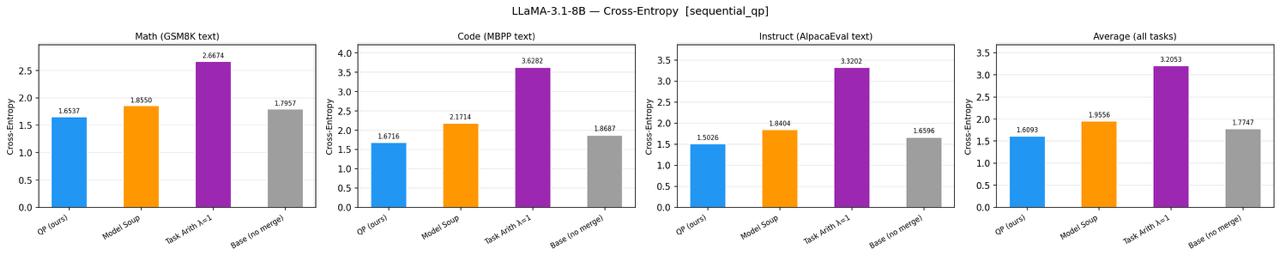}
    \caption{LLaMA 3.1 multi-layer sequential cross-entropy across benchmarks.}
    \label{fig:llama_ce_multi}
\end{figure}

\newpage

\section{Merging in a General Basis}\label{general}
Having identified which output subspaces are optimal in the relaxed projection problem, we now turn to the realisable problem: given a fixed basis $\{q_p\}$ and the available residual updates $\{\delta_N^{(k)}\}$, what merge coefficients $\bfd$ minimise the calibration loss?
The projection view characterises the best output directions for reducing residual error. We now derive the coefficient-tied optimisation problem that determines the optimal merge weights for a fixed basis and a fixed collection of model updates.

Recall that $P=\lbrace q_1 , \ldots q_p \rbrace$ is any orthonormal basis. For each basis direction $q_p$, model $k$, and sample $j$, define the
projected activation as the amplitude of this sample at the residual layer $N$ in direction $q_p$ by:
\begin{equation}
  \label{eq:spec-act}
  \alpha_{k,p}^{(j)}
  \;:=\;
  q_p^\top \delta_N^{(k)} Z x_j
  \;\in\;\R,
\end{equation}
and the projected residual as the target output propagated backwards to the residual layer $N$ in direction $Lq_p$ by:
\begin{equation}
  \beta_p^{(j)} \;:=\; (Lq_p)^\top b_j, \qquad b_j := h(x_j;\theta)-y_j.
\end{equation}
We parameterise the merged residual as a sum of rank-1 updates in the
$\{q_p\}$ basis, with a scalar weight $d_{k,p}\in\R$ for model $k$
along direction $q_p$.  The stacked weight vector is
$\mathbf d = [d_{k,p}]_{k,p}\in\R^{KP}$.  The merged output at sample $j$
is
\begin{align}
  h(x_j; \theta + \mathbf d \bm \delta )
  \;&=\;
  h(x_j;\theta)
  \;+\;
  L\!\left(\sum_{k=1}^K\sum_{p=1}^P d_{k,p}\,q_p\,q_p^\top\,\delta^{(k)}\right)\!Zx_j\\
  &=\; h(x_j;\theta)
  \;+\;
  \!\left(\sum_{k=1}^K\sum_{p=1}^P d_{k,p} \alpha_{k,p}^{(j)} L \,q_p\, \right)\! ,
\end{align}
noting that $d_{k,p} \alpha_{k,p}^{(j)}$ are scalars.
Define $G \;\in\; \R^{P\times P}$ as encoding the coupling induced by the
downstream map $L$:
\begin{equation}
  \label{eq:gram}
  G_{pp'} \;:=\; q_p^\top L^\top L\, q_{p'}.
\end{equation}
Note that $L^\top L$ defines the metric under which directions interact and $G=I$ exactly when $L^\top L=I$ for any choice of orthonormal basis. In this instance, the directions decouple.
The squared-output objective over the calibration set $\mathcal K$ is
\[
\mathcal{J}(\mathbf d)
=
\sum_{j\in\mathcal K}
\Bigg[
\sum_{k=1}^K \sum_{p=1}^P
\sum_{k'=1}^K \sum_{p'=1}^P
d_{k,p}\, d_{k',p'}\,
\alpha_{k,p}^{(j)}\,\alpha_{k',p'}^{(j)}\, G_{pp'}
\;+\;
2\sum_{k=1}^K \sum_{p=1}^P
d_{k,p}\,\alpha_{k,p}^{(j)}\,\beta_p^{(j)}
\;+\;
\|b_j\|^2
\Bigg].
\]
\begin{proposition}[General-basis QP]
\label{thm:general-qp}
Under the assumption that the model output is linear in the residual update, we obtain the following quadratic objective. Let
\[
\mathcal{J}(\mathbf d)=\tfrac{1}{2}\,\mathbf d^\top H\,\mathbf d + \mathbf f^\top \mathbf d
\]
be the quadratic objective with indices flattened over $(k,p)\in\{1,\dots,K\}\times\{1,\dots,P\}$, where
\[
H_{(k,p),(k',p')}
\;=\;
2\sum_{j\in\mathcal K} \alpha_{k,p}^{(j)}\,\alpha_{k',p'}^{(j)}\,G_{pp'},
\qquad
f_{(k,p)}
\;=\;
2\sum_{j\in\mathcal K} \alpha_{k,p}^{(j)}\,\beta_p^{(j)},
\]
and  $G\in\mathbb R^{P\times P}$ is defined by $G_{pp'}=q_p^\top L^\top L q_{p'}$ (hence $G\succeq0$) so that $G$ captures the coupling between basis directions induced by the downstream map $L$.
In the unconstrained case, we have the following:
\begin{enumerate}
  \item If $H\succ0$, the minimiser is unique and given by
  \(
    \mathbf d^\ast = -H^{-1}\mathbf f.
  \)
  \item If $H\succeq0$ is singular and $\mathbf f\in\operatorname{Range}(H)$,
  then the set of minimisers is the affine subspace
  \[
    \{\,\mathbf d^\ast = -H^{+}\mathbf f + \mathbf w : \mathbf w\in\ker(H)\,\}.
  \]
\end{enumerate}
\end{proposition}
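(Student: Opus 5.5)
The plan has three steps: write $\mathcal J$ as a sum of squared norms of affine functions of $\mathbf d$, read off $H$ and $\mathbf f$ by expanding, and then apply the standard first-order characterisation of minimisers of a convex quadratic.

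\emph{Step 1: the objective as a least-squares residual.}
Using the expression for the merged output derived above, the residual at sample $j$ is
\[
r_j(\mathbf d) = b_j + \sum_{k,p} d_{k,p}\,\alpha_{k,p}^{(j)} L q_p .
\]
Define $B_j\in\R^{c\times KP}$ to have column $(k,p)$ equal to $\alpha_{k,p}^{(j)} Lq_p$. Then $r_j(\mathbf d)=B_j\mathbf d+b_j$, and
\[
\mathcal J(\mathbf d)=\sum_{j\in\mathcal K}\|B_j\mathbf d+b_j\|^2 .
\]
This has exactly the structure $A_j\mathbf d+b_j$ used in Section~\ref{qp}, with $B_j$ in place of $A_j$.

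\emph{Step 2: reading off $H$ and $\mathbf f$.}
Expanding the square gives
\[
\mathcal J(\mathbf d)=\mathbf d^\top\Big(\sum_j B_j^\top B_j\Big)\mathbf d+2\Big(\sum_j B_j^\top b_j\Big)^\top\mathbf d+\sum_j\|b_j\|^2 .
\]
The entries of the Gram matrix are
\[
(B_j^\top B_j)_{(k,p),(k',p')}=\alpha_{k,p}^{(j)}\alpha_{k',p'}^{(j)}\,q_p^\top L^\top Lq_{p'}=\alpha_{k,p}^{(j)}\alpha_{k',p'}^{(j)}G_{pp'} .
\]
The linear coefficients are
\[
(B_j^\top b_j)_{(k,p)}=\alpha_{k,p}^{(j)}(Lq_p)^\top b_j=\alpha_{k,p}^{(j)}\beta_p^{(j)} .
\]
This reproduces the displayed quadruple-sum expression for $\mathcal J$. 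Setting $H=2\sum_j B_j^\top B_j$ and $\mathbf f=2\sum_j B_j^\top b_j$ gives $\tfrac12\mathbf d^\top H\mathbf d+\mathbf f^\top\mathbf d$, up to the additive constant $\sum_j\|b_j\|^2$. This constant does not affect the minimisers, and I will state it explicitly as the same ``$+\mathrm{const}$'' convention used in \eqref{eq:QP}.

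\emph{Step 2a: positive semidefiniteness.}
$H\succeq0$ is immediate from the Gram form, since $\mathbf d^\top H\mathbf d=2\sum_j\|B_j\mathbf d\|^2\ge0$. Separately, $G=(LQ)^\top(LQ)\succeq0$, where $Q=[q_1,\dots,q_P]$.

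\emph{Step 3: characterising the minimisers.}
Since $\mathcal J$ is convex and differentiable, $\mathbf d$ is a global minimiser iff $\nabla\mathcal J=H\mathbf d+\mathbf f=0$.
\begin{itemize}
\item If $H\succ0$, $H$ is invertible, so the unique minimiser is $\mathbf d^\ast=-H^{-1}\mathbf f$. Strict convexity also gives uniqueness directly.
\item If $H$ is singular and $\mathbf f\in\operatorname{Range}(H)$, then $HH^{+}$ is the orthogonal projector onto $\operatorname{Range}(H)$. Hence $H(-H^{+}\mathbf f)=-\mathbf f$, so $-H^{+}\mathbf f$ solves the normal equations.
\item In that case the full solution set of the linear system $H\mathbf d=-\mathbf f$ is this particular solution plus $\ker(H)$, which is the stated affine subspace.
\end{itemize}
For completeness, I will also show that $\mathbf f\in\operatorname{Range}(H)$ always holds here. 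This is because $\mathbf f=2\sum_j B_j^\top b_j$ lies in $\sum_j\operatorname{Range}(B_j^\top)=\operatorname{Range}(H)$, the standard least-squares fact. So a minimiser always exists.

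The main obstacle is bookkeeping rather than depth. The delicate points are keeping the $(k,p)$ flattening consistent, tracking the factor of $2$ between $\mathcal J$ and $\tfrac12\mathbf d^\top H\mathbf d$, and handling the dropped constant $\sum_j\|b_j\|^2$ honestly. I will also take care with $\operatorname{Range}(\sum_j B_j^\top B_j)=\sum_j\operatorname{Range}(B_j^\top)$. I will argue this via kernels: $\ker H=\bigcap_j\ker B_j$, and then take orthogonal complements.
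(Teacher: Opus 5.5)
Your proposal is correct and follows essentially the same route as the paper: the paper expands the squared output residual $\|b_j+\sum_{k,p} d_{k,p}\alpha_{k,p}^{(j)}Lq_p\|^2$ into the displayed quadruple sum, reads off $H$ and $\mathbf f$, and relies on the standard first-order characterisation of minimisers of a convex quadratic. Your explicit handling of the dropped constant $\sum_j\|b_j\|^2$, and your observation that $\mathbf f\in\operatorname{Range}(H)$ always holds (via $\ker H=\bigcap_j\ker B_j$), are correct refinements that the paper leaves implicit.
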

The general-basis QP defines a single global optimisation over all merge coefficients. The Hessian 
$H$ contains cross-terms between models and basis directions, reflecting the fact that their contributions to the output are coupled through shared inputs and the downstream map. As a result, the optimal coefficients cannot be determined independently, but must be solved jointly. Fixed or coordinate-independent rules such as task arithmetic ignore the off-diagonal coupling structure in $H$, while sequential layer-wise methods instead solve a sequence of restricted subproblems and are therefore greedy approximations to the joint objective. Our formulation instead solves the full coupled problem, yielding the minimum of the quadratic objective under the chosen basis.

When $G= I$ and $p=1$, the solution reduces to the 1-D form in Proposition \ref{prop:1d}.

\subsection{Extension to Other Loss Functions}

While our closed-form projection result is derived for squared loss, the notion of irreducible error extends to any convex loss as an optimal value gap induced by restricting the correction subspace. However, the resulting objective no longer admits a simple projection or trace-based characterisation.



Fisher-weighted merging~\citep{matena_merging_2022} does not arise as an exact special case of our squared-output projection objective. Rather, it is a related parameter-space quadratic merging objective. Given fine-tuned checkpoints $\theta_k$ and diagonal Fisher matrices $F_k$, Fisher merging solves
\(
\min_\theta \sum_k (\theta-\theta_k)^\top F_k(\theta-\theta_k),
\)
whose solution is
\(
\theta^* =
\left(\sum_k F_k\right)^{-1}
\sum_k F_k \theta_k .
\)
For classification models, the Fisher metric is induced by the likelihood, equivalently the negative log-likelihood or cross-entropy loss. Thus Fisher merging can be viewed as replacing the output-space squared-loss geometry used in our QP with a parameter-space Fisher metric; it fits our framework at the level of quadratic merge objectives, but not as an exact instance of the output-space projection formulation.

\section{Multi-Layer Merging}
\label{sec:multi}

Under the linear-in-update assumption, each subproblem of the single-layer-merge formulation remains a convex QP.
In contrast, multi-layer merging introduces interactions between updates applied at different layers, and the objective is no longer convex. In particular, updates at different layers compose nonlinearly, so the optimal joint solution cannot be obtained by solving independent layer-wise problems. We therefore adopt practical approximations for the multi-layer setting.

\paragraph{Sequential layer-wise merging.}
A natural approach is to apply the single-layer QP sequentially across layers. At each step, the residual is updated using the current merged model, and the QP is solved for the next layer. This recomputes the upstream and downstream maps at each stage, so each subproblem remains an exact convex single-layer QP. However, this procedure is greedy: it produces a sequence of locally optimal updates without accounting for how earlier decisions affect later layers.
As a result, sequential merging introduces an approximation error arising from cross-layer interactions. In particular, the effect of an update at one layer depends on updates applied at subsequent layers, which are not considered when solving earlier QPs. The interaction error arises from compositions of updates across layers. For example, in a two-layer setting where updates from two different fine-tuned models in distinct layers are $\delta_2$ and $\delta_1$, this term takes the form $\delta_2 \delta_1$. More generally, the interaction error scales with products of layer-wise update norms and is second-order in the magnitude of the updates.

\paragraph{Hybrid merging with QP refinement.}
Alternatively, one can first obtain a multi-layer merged model using an existing method (e.g. model soups or task arithmetic), and then refine selected layers using the QP. This leverages the global structure captured by heuristic methods while using the QP to optimise layer-wise corrections in a principled way.
In practice, both approaches trade off optimality for tractability in the multi-layer setting. Empirically, we find that controlling the magnitude of updates, for example via regularisation or by restricting merging to a subset of layers, improves performance, consistent with the role of cross-layer interactions.

\end{document}